\newtheorem{theorem}{Theorem}
\newtheorem{definition}{Definition}
\def\BibTeX{{\rm B\kern-.05em{\sc i\kern-.025em b}\kern-.08em
    T\kern-.1667em\lower.7ex\hbox{E}\kern-.125emX}}
\begin{document}

\title{Regularized Non-negative Spectral Embedding for Clustering}

\author{\IEEEauthorblockN{Yifei Wang, Rui Liu, Yong Chen(Corresponding), Hui Zhang, Zhiwen Ye}
\IEEEauthorblockA{School of Computer Science and Engineering, Beihang University, Beijing 100191, China \\
\{yifeiwang,lr\}@buaa.edu.cn, \{chenyong,hzhang\}@nlsde.buaa.edu.cn, yezhiwen@buaa.edu.cn}
}

\maketitle

\begin{abstract}
Spectral Clustering is a popular technique to split data points into groups, especially for complex datasets.
The algorithms in the Spectral Clustering family typically consist of multiple separate stages (such as similarity matrix construction, low-dimensional embedding, and K-Means clustering as post-processing), which may lead to sub-optimal results because of the possible mismatch between different stages.
In this paper, we propose an end-to-end single-stage learning method to clustering called Regularized Non-negative Spectral Embedding (RNSE) which extends Spectral Clustering with the adaptive learning of similarity matrix and meanwhile utilizes non-negative constraints to facilitate one-step clustering (directly from data points to clustering labels).
Two well-founded methods, \emph{successive alternating projection} and \emph{strategic multiplicative update}, are employed to work out the quite challenging optimization problems in RNSE.
Extensive experiments on both synthetic and real-world datasets demonstrate RNSE's superior clustering performance to some state-of-the-art competitors.
\end{abstract}

\begin{IEEEkeywords}
Spectral Clustering, One-stage Learning, Successive Alternating Projection, Strategic Multiplicative Update
\end{IEEEkeywords}
\section{Introduction}
Clustering is an important unsupervised learning task which aims to group a set of data objects into clusters in such a way that objects in the same cluster are more similar to each other than those in different clusters.
For complex datasets, Spectral Clustering \cite{Andrew-NIPS-SpectralClustering-2002} and its many variants\cite{Zhang-RoSC-2019,Shaham-SNet-2018,Jin-deepSC-2017} are particularly popular due to their ability of discovering highly non-convex clusters.
Such algorithms make use of the spectrum of the similarity matrix of the data to perform dimensionality reduction before grouping objects in a low-dimensional space \cite{Andrew-NIPS-SpectralClustering-2002,Mikhail-Eigenvalues-NIPS-2002,Ron-NIPS-DSN-SpectralClustering-2007,Deng-SR-ICDM-2007,Xinlei-AAAI-LSC-2011,Feiping-AAAI-CLR-2016,Feiping-AAAI-ULGE-2017}.
Typically, the algorithms in the Spectral Clustering family consist of multiple separate stages as follows:
\begin{enumerate}[leftmargin=*,label={(\arabic*)}] 
  \item Construct a pairwise similarity matrix, e.g., according to the $k$-nearest-neighbors graph of the data;
  \item Compute the corresponding Laplacian matrix and normalize it;
  \item Represent the data objects in a low-dimensional space using a few eigenvectors of the Laplacian matrix corresponding to its smallest eigenvalues;
  \item Re-normalize the embedded data vectors and then group them into $C$ clusters, e.g., through the classic K-Means algorithm \cite{Stuart-IEEE-K-means-1982}.
\end{enumerate}
The above multi-stage approach may lead to sub-optimal clustering results due to the possible mismatch between different stages.
Moreover, there is still much room for improvement in the optimization methods.
For example, normalizing the similarity matrix into a doubly stochastic matrix\footnote{A doubly stochastic matrix is a square matrix that satisfies $\mathbf{S}\mathbf{1}=\mathbf{1}$, $\mathbf{S}^T\mathbf{1}=\mathbf{1}$, and $\mathbf{S} \ge 0$, where $\mathbf{1}$ is a column vector with all elements to be $1$ and $\mathbf{S} \ge 0$ represents element-wise non-negativity.} has been found to be beneficial \cite{Ron-NIPS-DSN-SpectralClustering-2007}, and imposing some global priors (like Laplacian rank) could help to reveal the underlying clustering structure of the dataset \cite{Feiping-AAAI-CLR-2016,Lefei-IJCAI-AMRMF-2017}.

In light of the above analysis, we extend Spectral Clustering and propose an end-to-end single-stage learning framework for clustering named ``Regularized Non-negative Spectral Embedding'', or RNSE in short.
It does not rely on a predefined similarity matrix but learn the similarity matrix in a data-driven self-adaptive manner.
Furthermore, it introduces two global priors, i.e., the doubly stochastic constraint (for normalizing the similarity matrix) and the non-negative low-rank constraint (for capturing the intrinsic clustering structure), to facilitate the optimization.
The effectiveness of RNSE for clustering has been confirmed by extensive experiments on both synthetic and real-world datasets.
\section{Related Work}
Our proposed clustering technique RNSE is closely connected to several research areas.

First of all, RNSE obviously has its roots in the Spectral Clustering \cite{Andrew-NIPS-SpectralClustering-2002} and also its many variants including Laplacian Eigenmap (LE) \cite{Mikhail-Eigenvalues-NIPS-2002}, Locality Preserving Projections (LPP) \cite{Xiaofei-LPP-NIPS-2004}, and Spectral Regression (SR) \cite{Deng-SR-ICDM-2007}. 
Those techniques start from a predefined pairwise similarity matrix and perform clustering (and other tasks) via spectral decomposition.
They typically consist of several separate stages.
In contrast, our proposed RNSE technique carries out the whole process from the given data straight to the clustering result in just one stage, with the similarity matrix automatically learned.

Next, the formulation of RNSE imposes non-negative constraints, so it is related to the series of Non-negative Matrix Factorization (NMF) \cite{Daniel-NMF-learning-objects-Nature-1999} techniques including Non-negative Matrix Tri-factorization (NM3F) \cite{Huifang-PAKDD-NM3F-2010} and Graph regularized NMF (GNMF) \cite{Deng-GNMF-TPAMI-2011}.
Those techniques decompose the data matrix into two or more low-rank non-negative matrices from which the clustering structures of the data could be read out.
However, our proposed RNSE technique is different from them as it contains two sub-problems of optimization with non-negative constraints which are combined in a unified optimization framework.
Thus RNSE could obtain the clustering results directly after solving the optimization problem, while those NMF-based methods need some post-processing such as using the K-Means algorithm \cite{Stuart-IEEE-K-means-1982} to get the final clustering results.

Last but not the least, RNSE utilizes structural regularization in its learning algorithm.
Generally speaking, it is useful to incorporate appropriate priors into the learning process as the priors could help to find the intrinsic structure of data.
Locality Preserving Projections (LPP) \cite{Xiaofei-LPP-NIPS-2004} tries to maintain the $k$-nearest-neighbors graph while performing linear dimensionality reduction of the data.
Similarly, Graph Regularized Non-negative Matrix Factorization (GNMF) \cite{Deng-GNMF-TPAMI-2011} adds a $k$-nearest-neighbors graph based regularizer term to the vanilla NMF algorithm.
Those two techniques both use the local (manifold) structure of data for regularization.
There also exist techniques with global structural regularization.
For example, Doubly Stochastic Normalization (DSN) \cite{Ron-NIPS-DSN-SpectralClustering-2007} enforces the doubly stochastic condition on the similarity matrix before carrying out Spectral Clustering.
Besides, Structured Doubly Stochastic Matrix (SDS) \cite{Xiaoqian-KDD-2016}, Clustering with Adaptive Neighbors (CAN) \cite{Feiping-KDD-CAN-2014} and Constrained Laplacian Rank (CLR) \cite{Feiping-AAAI-CLR-2016} borrow the idea of $C$-connected components (cf. Theorem~\ref{C_connected_clusters}) from the spectral graph theory to form the regularization for their learning algorithms.
Inspired by the above methods, our proposed RNSE technique utilizes both global structures (i.e., doubly stochastic matrix and non-negativity constrained $C$-connected components) as regularizers for clustering.
\section{The Proposed Approach}
Given a data matrix $\mathbf{X} = [{x_1},{x_2}, \cdots ,{x_N}] \in {\mathbb{R}^{M \times N}}$, where $M$ is the dimension of a sample, $N$ marks the number of the total samples, and $x_i \in \mathbb{R}^M$ denotes the $i$-th sample ($i=1,2,\cdots,N$). Let $\mathbf{S} \in {\mathbb{R}^{N \times N}}$ be the similarity matrix and $\mathbf{S}_{ij}$ corresponds to the similarity between $x_i$ and $x_j$. Besides, consider that $\Phi ( \cdot ):x \in {\mathbb{R}^M} \mapsto \mathcal{H}$ be a feature mapping from $x$ onto a reproducing kernel Hilbert space $\mathcal{H}$ and there exists $\mathcal{K}(x,y) =  < \Phi (x),\Phi (y) >$. Classic clustering methods
usually pre-compute the similarity matrix based on the Euclid distance between pairwise samples. However, we formulate it as a data-based learning problem:
\begin{equation}\label{similarity_learning}
\mathop {\min }\limits_\mathbf{S} \;\mathcal{O}(\mathbf{S}) = \frac{1}{2}\sum\nolimits_{ij} {{\mathbf{S}_{ij}}||{\Phi} ({x_i}) - {\Phi} ({x_j})||_2^2}
\end{equation}
\begin{displaymath}
s.t.\;\mathbf{S} \ge 0,\;\mathbf{S} = {\mathbf{S}^T},\;\mathbf{S}\mathbf{1} = \mathbf{1},
\end{displaymath}
where $\mathbf{S}$ is constrained to become a doubly stochastic matrix for better clustering \cite{Ron-NIPS-DSN-SpectralClustering-2007,Fei-ICDM-Bi-Stochastic-2010}. Note that we utilize the distance between the kernel mappings instead of the original vectors in Eq.~(\ref{similarity_learning}) because $||\cdot||_2$ measures the Euclid space and kernel mappings in the Hilbert space would match such characteristic. Based on the similarity matrix $\mathbf{S}$, the classic spectral embedding methods \cite{Mikhail-Eigenvalues-NIPS-2002,Andrew-NIPS-SpectralClustering-2002,Ron-NIPS-DSN-SpectralClustering-2007,Feiping-AAAI-CLR-2016,Feiping-AAAI-ULGE-2017} could be formulated as below:
\begin{equation}\label{spectral-embedding}
\mathop {\min }\limits_\mathbf{P} \; \frac{1}{2}\sum\nolimits_{ij} {{\mathbf{S}_{ij}}||{\mathbf{P}_i} - {\mathbf{P}_j}||_2^2},\;s.t.\;\mathbf{P}{\mathbf{P}^T} = {\mathbf{I}_{E}},
\end{equation}
where $\mathbf{P}$ is the spectral embedding for the data points and $E$ is the dimension of the embedding vectors. Thereafter, K-Means is adopted to cluster the data samples as a popular post-processing technique. However, we further put the non-negativity on $\mathbf{P}$, set $E$ to $C$ (the number of clusters), and finally arrive at:
\begin{equation}\label{non-negative-spectral-embedding}
\mathop {\min }\limits_\mathbf{P} \;\mathcal{O}(\mathbf{P}) = \frac{1}{2}\sum\nolimits_{ij} {{\mathbf{S}_{ij}}||{\mathbf{P}_i} - {\mathbf{P}_j}||_2^2},\;\\
\end{equation}
\begin{displaymath}
s.t.\;\mathbf{P} \ge 0, \; \mathbf{P}{\mathbf{P}^T} = {\mathbf{I}_{C}}.
\end{displaymath}
Noticeably, each column of $\mathbf{P}$ will be one-hot vector; in other words, $\mathbf{P}$ could be treated as an indicator matrix for clustering. Besides, from the following Theorem~\ref{C_connected_clusters} and Theorem~\ref{ky_fans_theorem}, we can conclude that the optimization~(\ref{non-negative-spectral-embedding}) actually captures the intrinsic structures, i.e., $C$-collected clusters.

\begin{theorem}[$C$-connected clusters \cite{Fan-Mathematical-SpectralGraphTheory-1997}]\label{C_connected_clusters}
``The multiplicity $C$ of the eigenvalue $0$ of the Laplacian matrix $\mathbf{L}_\mathbf{S}$ is equal to the number of connected clusters/components in the graph associated with $\mathbf{S}$.'', which implies:
\begin{equation}
rank({\mathbf{L}_\mathbf{S}}) = N - C \Leftrightarrow \sum\limits_{i = 1}^C {{\lambda _i}}  = 0,
\end{equation}
where ${\mathbf{L}_\mathbf{S}} = diag(\mathbf{S}\mathbf{1})\footnote{$diag( \cdot )$ is a diagonal matrix spanned by the vector parameter. For instance, $diag(\mathbf{1})$ is actually an Identity matrix $\mathbf{I}$. However, if the parameter is a square matrix, then it will returns a column vector with the diagnoal elements, e.g., $diag(\mathbf{I}) = \mathbf{1}$.} - \mathbf{S}$, and $\{\lambda_1,\lambda_2,\cdots,\lambda_N\}$ are the eigenvalues of $\mathbf{L}_\mathbf{S}$ in an ascending order.
\end{theorem}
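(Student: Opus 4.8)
The plan is to prove the combinatorial heart of the statement — that the multiplicity of the eigenvalue $0$ of $\mathbf{L}_\mathbf{S}$ equals the number of connected components — and then read off the rank identity and the eigenvalue-sum equivalence as easy corollaries. First I would record the fundamental quadratic-form identity: for any $\mathbf{f} \in \mathbb{R}^N$,
\[
\mathbf{f}^T \mathbf{L}_\mathbf{S} \mathbf{f} = \frac{1}{2}\sum_{ij} \mathbf{S}_{ij}(f_i - f_j)^2,
\]
which follows by expanding $\mathbf{L}_\mathbf{S} = diag(\mathbf{S}\mathbf{1}) - \mathbf{S}$ and using $\mathbf{S} = \mathbf{S}^T$. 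Because $\mathbf{S} \ge 0$, every summand is non-negative, so $\mathbf{L}_\mathbf{S}$ is positive semidefinite and all its eigenvalues satisfy $\lambda_i \ge 0$. Combined with the ascending ordering, this immediately yields the scalar equivalence $\sum_{i=1}^C \lambda_i = 0 \Leftrightarrow \lambda_1 = \cdots = \lambda_C = 0$; that is, the eigenvalue $0$ has multiplicity at least $C$.

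Next I would connect multiplicity to rank. Since $\mathbf{L}_\mathbf{S}$ is real symmetric, it is orthogonally diagonalizable, so the algebraic multiplicity of the eigenvalue $0$ equals $\dim \ker \mathbf{L}_\mathbf{S}$. The rank--nullity theorem then gives $rank(\mathbf{L}_\mathbf{S}) = N - \dim \ker \mathbf{L}_\mathbf{S}$, converting ``the multiplicity of $0$ equals $C$'' into $rank(\mathbf{L}_\mathbf{S}) = N - C$ and closing the stated $\Leftrightarrow$.

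The core of the argument is therefore to show that $\dim \ker \mathbf{L}_\mathbf{S}$ equals the number of connected components. For the lower bound, let $A_1, \ldots, A_C$ be the vertex sets of the components and consider their indicator vectors $\mathbf{1}_{A_k}$. A direct computation shows $\mathbf{L}_\mathbf{S}\mathbf{1}_{A_k} = 0$: on a vertex $i \in A_k$ the $i$-th entry equals $(\mathbf{S}\mathbf{1})_i - \sum_{j \in A_k}\mathbf{S}_{ij}$, which vanishes because $\mathbf{S}_{ij} = 0$ whenever $j$ lies in another component; and on $i \notin A_k$ both terms vanish for the same reason. These indicators have disjoint supports, hence are linearly independent, giving $\dim \ker \mathbf{L}_\mathbf{S} \ge C$. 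For the matching upper bound, take any $\mathbf{f} \in \ker \mathbf{L}_\mathbf{S}$; then $\mathbf{f}^T \mathbf{L}_\mathbf{S}\mathbf{f} = 0$ forces $\mathbf{S}_{ij}(f_i - f_j)^2 = 0$ for every pair, so $f_i = f_j$ whenever $\mathbf{S}_{ij} > 0$. Propagating this equality along edge-paths inside each component shows that $\mathbf{f}$ is constant on every $A_k$, hence $\mathbf{f} \in \operatorname{span}\{\mathbf{1}_{A_1}, \ldots, \mathbf{1}_{A_C}\}$ and $\dim \ker \mathbf{L}_\mathbf{S} \le C$.

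The step I expect to be the main obstacle is this last propagation argument: turning the purely local information ``$f_i = f_j$ for every adjacent pair'' into the global conclusion ``$\mathbf{f}$ is constant on each component'' rests on the very definition of a connected component as a maximal vertex set joined by edge-paths, together with the fact that $\mathbf{S}_{ij} > 0$ is exactly the adjacency relation. Everything else — the quadratic-form identity, positive semidefiniteness, and rank--nullity — is routine linear algebra that I would state without belaboring.
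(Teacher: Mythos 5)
Your proof is correct, but it is worth noting that the paper itself offers no proof of this theorem at all: the statement is quoted with a citation to Chung's spectral graph theory monograph, and the rank/eigenvalue-sum ``implication'' is asserted without argument. Your self-contained argument is the standard one underlying that reference (and von Luxburg-style expositions of spectral clustering): the quadratic-form identity $\mathbf{f}^T\mathbf{L}_\mathbf{S}\mathbf{f}=\frac{1}{2}\sum_{ij}\mathbf{S}_{ij}(f_i-f_j)^2$ gives positive semidefiniteness, component indicator vectors give $\dim\ker\mathbf{L}_\mathbf{S}\ge C$, propagation of $f_i=f_j$ along edge-paths gives the reverse inequality, and symmetry plus rank--nullity converts kernel dimension into both the rank condition and the eigenvalue-sum condition. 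Beyond making the paper self-contained, your proof buys precision on a point the paper glosses over: as you correctly note, $\sum_{i=1}^C\lambda_i=0$ is equivalent only to ``the multiplicity of $0$ is \emph{at least} $C$,'' whereas $rank(\mathbf{L}_\mathbf{S})=N-C$ asserts multiplicity \emph{exactly} $C$. Read literally with $C$ a free target parameter, the displayed ``$\Leftrightarrow$'' therefore holds only in the direction $rank(\mathbf{L}_\mathbf{S})=N-C \Rightarrow \sum_{i=1}^C\lambda_i=0$; the biconditional is legitimate when $C$ is the actual number of connected components, in which case both sides hold simultaneously by your main claim. This looseness is a defect of the paper's statement, not of your argument, and your careful separation of the two multiplicty conditions is exactly what exposes it.
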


\begin{theorem}[Ky Fan's Theorem \cite{KyFan-National-Eigenvalues-1949}]\label{ky_fans_theorem}
Given a matrix $\mathbf{P} \in \mathbb{R}^{C \times N}$, the following optimization problem:
\begin{equation}
\mathop {\min }\limits_\mathbf{P} \;\frac{1}{2}\sum\nolimits_{ij} {{\mathbf{S}_{ij}}||{\mathbf{P}_i} - {\mathbf{P}_j}||_2^2}  = tr(\mathbf{P}\cdot{\mathbf{L}_\mathbf{S}}\cdot{\mathbf{P}^T})
\end{equation}
\begin{displaymath}
s.t.\;\mathbf{P}{\mathbf{P}^T} = {\mathbf{I}_C}
\end{displaymath}
is equivalent to $\sum\limits_{i = 1}^C {{\lambda _i}}  \to 0$.
\end{theorem}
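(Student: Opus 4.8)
The plan is to prove the statement in two stages: first verify the trace identity that rewrites the pairwise objective, then carry out the Ky Fan argument that pins down its minimum value. For the identity I would expand $\|\mathbf{P}_i-\mathbf{P}_j\|_2^2 = \|\mathbf{P}_i\|_2^2 - 2\langle\mathbf{P}_i,\mathbf{P}_j\rangle + \|\mathbf{P}_j\|_2^2$ (here $\mathbf{P}_i \in \mathbb{R}^{C}$ denotes the $i$-th column of $\mathbf{P}$) and substitute into the double sum. Using the symmetry $\mathbf{S}=\mathbf{S}^T$, the two norm-squared terms merge, leaving $\sum_i (\mathbf{S}\mathbf{1})_i\|\mathbf{P}_i\|_2^2 - \sum_{ij}\mathbf{S}_{ij}\langle\mathbf{P}_i,\mathbf{P}_j\rangle = tr(\mathbf{P}\,diag(\mathbf{S}\mathbf{1})\,\mathbf{P}^T) - tr(\mathbf{P}\mathbf{S}\mathbf{P}^T) = tr(\mathbf{P}\mathbf{L}_\mathbf{S}\mathbf{P}^T)$, where the last equality uses $\mathbf{L}_\mathbf{S} = diag(\mathbf{S}\mathbf{1}) - \mathbf{S}$ from Theorem~\ref{C_connected_clusters}. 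This part is routine bookkeeping.

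For the minimization I would diagonalize $\mathbf{L}_\mathbf{S}$, which is symmetric positive semi-definite as a graph Laplacian, writing $\mathbf{L}_\mathbf{S} = \mathbf{U}\Lambda\mathbf{U}^T$ with $\mathbf{U}$ orthogonal and $\Lambda = diag(\lambda_1,\ldots,\lambda_N)$, $\lambda_1 \le \cdots \le \lambda_N$. Setting $\mathbf{Q} = \mathbf{P}\mathbf{U}$ preserves the constraint as $\mathbf{Q}\mathbf{Q}^T = \mathbf{I}_C$ and turns the objective into $tr(\mathbf{Q}\Lambda\mathbf{Q}^T) = \sum_{k=1}^N \lambda_k w_k$, where $w_k = \sum_c \mathbf{Q}_{ck}^2 = (\mathbf{Q}^T\mathbf{Q})_{kk}$ is the squared norm of the $k$-th column of $\mathbf{Q}$. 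The key step is to show that the weight vector lies in the polytope $\{\,0 \le w_k \le 1,\ \sum_k w_k = C\,\}$: the equality follows from $\sum_k w_k = tr(\mathbf{Q}\mathbf{Q}^T) = C$, while $w_k \le 1$ holds because $\mathbf{Q}^T\mathbf{Q}$ is an orthogonal projector of rank $C$ whose diagonal entries lie in $[0,1]$ (equivalently, extend the orthonormal rows of $\mathbf{Q}$ to a full orthogonal matrix, so each truncated column has norm at most one).

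Minimizing the linear functional $\sum_k \lambda_k w_k$ over this polytope, with the eigenvalues in ascending order, concentrates all of the unit mass on the $C$ smallest eigenvalues, yielding the lower bound $\sum_{i=1}^C \lambda_i$; the bound is attained by taking the rows of $\mathbf{P}$ to be the eigenvectors associated with $\lambda_1,\ldots,\lambda_C$. Hence the minimum value equals $\sum_{i=1}^C \lambda_i$. Since $\mathbf{L}_\mathbf{S}$ is positive semi-definite, each $\lambda_i \ge 0$, so this sum is bounded below by $0$ and driving the objective to its infimum is exactly driving $\sum_{i=1}^C \lambda_i \to 0$; by Theorem~\ref{C_connected_clusters} the zero-attaining case coincides with the $C$-connected-component structure, which is the asserted equivalence.

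I expect the main obstacle to be the polytope characterization in the second stage, specifically the upper bound $w_k \le 1$, which requires recognizing $\mathbf{Q}^T\mathbf{Q}$ as a projection (or arguing via an orthogonal extension of $\mathbf{Q}$). Once the feasible region for the weights is correctly identified, the remaining linear-programming minimization is immediate.
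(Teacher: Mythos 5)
Your proof is correct, but note that the paper itself offers no proof of this theorem: it is stated as a known result and discharged by citation to Ky Fan's 1949 paper (exactly as Theorems~\ref{projection_theorem}, \ref{Dykstra_Method} and \ref{P_C_1_Projection} are discharged by citations). What you have written is a complete, self-contained derivation of that cited result, and it follows the standard variational route: the trace identity in your first stage is routine and correct; in the second stage, passing to $\mathbf{Q}=\mathbf{P}\mathbf{U}$ and observing that $\mathbf{Q}^T\mathbf{Q}$ is a rank-$C$ orthogonal projector, so its diagonal $w$ lies in the polytope $\{0\le w_k\le 1,\ \sum_k w_k = C\}$, reduces the problem to a linear program whose vertices are the $0/1$ indicators of $C$-element subsets, giving the minimum $\sum_{i=1}^C\lambda_i$, attained when the rows of $\mathbf{P}$ are the bottom $C$ eigenvectors. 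Your identification of the bound $w_k\le 1$ as the crux is accurate, and your projector (or orthogonal-extension) argument for it is sound. One caution on your closing step: the eigenvalues $\lambda_i$ depend on $\mathbf{S}$ alone, so minimizing over $\mathbf{P}$ cannot literally ``drive'' $\sum_{i=1}^C\lambda_i$ anywhere; what your argument actually establishes --- and what the loosely worded statement intends --- is that the optimal value of the $\mathbf{P}$-problem \emph{equals} $\sum_{i=1}^C\lambda_i$, so that in the joint optimization, where $\mathbf{S}$ is also learned, pushing this optimal value toward zero is the same as pushing $\sum_{i=1}^C\lambda_i\to 0$, which by Theorem~\ref{C_connected_clusters} corresponds to the $C$-connected-component structure. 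With that reading made explicit, your proof is complete, and it has the merit of making the paper self-contained where the original only points to the literature.
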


Generally, learning with multi-stages, e.g., Spectral Clustering, would usually lead to sub-optimal solutions for clustering. Therefore, we build a marriage between (\ref{similarity_learning}) and (\ref{non-negative-spectral-embedding}) into a joint learning approach:
\begin{equation}\label{overall_objective_function}
\mathop {\min }\limits_{\mathbf{S},\mathbf{P}} \;\mathcal{O}(\mathbf{S},\mathbf{P}),\;s.t.\;\mathbf{S} \in {\mathbf{S}_{dsm}},\;\mathbf{P} \in {\mathbf{P}_{nlr}},
\end{equation}
with
\begin{equation}\label{loss_function}
\mathcal{O}(\mathbf{S},\mathbf{P}) = {\mathcal{O}(\mathbf{S}}) + \alpha ||\mathbf{S}||_F^2 + \beta \mathcal{O}(\mathbf{P}),
\end{equation}
\begin{equation}\label{doubly_stochastic_matrix}
{\mathbf{S}_{dsm}} = \{ \mathbf{S} \ge 0|\mathbf{S} = {\mathbf{S}^T},\mathbf{S}\mathbf{1} = \mathbf{1}\} ,
\end{equation}
\begin{equation}\label{non-negative_low_rank}
{\mathbf{P}_{nlr}} = \{ \mathbf{P} \ge 0|\mathbf{P} \cdot {\mathbf{P}^T} = {\mathbf{I}_C}\}.
\end{equation}
Obviously, $\mathbf{S}_{dsm}$ is a set of doubly stochastic matrices, $\mathbf{P}_{nlr}$ is a set of non-negative low-rank matrices, and $\alpha$ and $\beta$ are two positive hyper-parameters. The philosophy of the optimization (\ref{overall_objective_function}) is an end-to-end single-stage learning for clustering based on non-negative spectral embedding. Therefore, we call our method ``Regularized Non-negative Spectral Embedding (RNSE)'' for Clustering.
\section{Optimization Methods}
Regarding the objective function $\mathcal{O}(\mathbf{S},\mathbf{P})$, there are two coupled variables to be learned which indicates that it's a non-convex optimization problem. Thus, we adopt the classic strategies to address such optimization problem with alternative iterations \cite{Daniel-NMF-learning-objects-Nature-1999,Daniel-NMF-Algorithm-NIPS-2001,Deng-GNMF-TPAMI-2011}, i.e., updating $\mathbf{P}$ while keeping $\mathbf{S}$ fixed and vice versa, until a local minima is achieved. The learning process is narrated in Algorithm~\ref{algo:Framework}. Subsequently, we depict the detailed ideas for solving the two subproblems. 

\begin{algorithm}[t]
\caption{The learning process: RNSE}\label{algo:Framework}
\KwIn{Dataset $\{ {x_i}\} _{i = 1}^N$; hyper-parameters $\alpha$, $\beta$.}
\KwOut{Similarity matrix $\mathbf{S}$ and indicator matrix $\mathbf{P}$.}
\Begin
{
    Randomly initialize indicator matrix $\mathbf{P}$ and similarity matrix $\mathbf{S}$\label{algorithm_line_initialization}\;
    \Repeat{convergence}{
      Optimize problem~(\ref{overall_objective_function}) w.r.t. $\mathbf{S}$ while keeping $\mathbf{P}$ fixed, i.e., Algorithm~\ref{algo:updatingS}\label{algorithm_line_updateS}\;
      Optimize problem~(\ref{overall_objective_function}) w.r.t. $\mathbf{P}$ while keeping $\mathbf{S}$ fixed, i.e., Algorithm~\ref{algo:updatingP}\label{algorithm_line_updateP}\;
    }
}
\end{algorithm}

\subsection{Optimizing $\mathbf{S}$ while keeping $\mathbf{P}$ fixed}
When clustering indicator matrix $\mathbf{P}$ is fixed, the subproblem for optimizing similarity matrix $\mathbf{S}$ can be written as:

\begin{equation}\label{subproblem_update_S}
\mathop {\min }\limits_\mathbf{S} \mathcal{O}(\mathbf{S},\mathbf{P})\;\;s.t.\;\mathbf{S} \in {\mathbf{S}_{dsm}}.
\end{equation}
Set $\mathbf{A} = [\Phi ({x_1}),\Phi ({x_2}), \cdots ,\Phi ({x_N})]$, then we can transform (\ref{subproblem_update_S}) into:
\begin{equation}
\mathop {\min }\limits_\mathbf{S} \;\alpha ||\mathbf{S}||_F^2 + tr(\mathbf{A} \cdot {\mathbf{L}_\mathbf{S}} \cdot {\mathbf{A}^T})+ \beta \cdot  tr(\mathbf{P} \cdot {\mathbf{L}_\mathbf{S}} \cdot {\mathbf{P}^T})
\end{equation}
\begin{displaymath}
s.t.\;\mathbf{S} \in {\mathbf{S}_{dsm}},
\end{displaymath}
which is further equivalent to the simplified formalization:
\begin{equation}\label{subproblem_update_S_final}
\mathop {\min }\limits_\mathbf{S} ||\mathbf{S} - \mathbf{T}||_F^2,\;\;s.t.\;\mathbf{S} \in {\mathbf{S}_{dsm}},
\end{equation}
with $\mathbf{T} = \frac{1}{{2\alpha }}\left[ {({\mathbf{A}^T}\mathbf{A} + \beta{\mathbf{P}^T}\mathbf{P}) - diag({\mathbf{A}^T}\mathbf{A} + \beta{\mathbf{P}^T}\mathbf{P}) \cdot {\mathbf{1}^T}} \right]$.
Essentially, this is an optimization problem to find a doubly stochastic matrix $\mathbf{S}$ nearest to the given matrix $\mathbf{T}$, which could be converted into the ``metric projection optimizations'' and solved with alternating projection methods. 

\begin{definition}[Metric Projection]\label{metric_projection}
Given a set $\mathcal{S} \subseteq \mathcal{H}$ and a point $x \in \mathcal{H}$, the metric projection (if exists) of $x$ onto $\mathcal{S}$ is a point $p \in \mathcal{S}$ such that:
\begin{equation}
||p - x|| = d(x,\mathcal{S}): = \mathop {\inf }\limits_{s \in \mathcal{S}} \;||x - s||.
\end{equation}
Additionally, if for any $x \in \mathcal{H}$, there exists such a unique $p$, then the metric projection onto $\mathcal{S}$ is rewritten as the following operator:
\begin{equation}
{P_\mathcal{S}}: \mathcal{H} \longmapsto \mathcal{S},\;i.e.,\;{P_\mathcal{S}}(x) = p.
\end{equation}
\end{definition}

\begin{theorem}[Projection Theorem]\label{projection_theorem}
Set $\mathcal{C} \subseteq \mathcal{H}$ be a closed convex set. For any $x \in \mathcal{H}$, there exists a unique $p \in \mathcal{C}$ such that $||x - p|| \le ||x - c||$ for all $c \in \mathcal{C}$, which is formally denoted as: 
\begin{equation}
P_\mathcal{C}(x)=p.
\end{equation}
\end{theorem}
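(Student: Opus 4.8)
The plan is to establish existence and uniqueness separately, with the parallelogram law serving as the workhorse in both parts. First I would set $d = \inf_{c \in \mathcal{C}} \|x - c\|$, which is well-defined since the distances are bounded below by $0$. I would then choose a minimizing sequence $\{c_n\} \subseteq \mathcal{C}$ with $\|x - c_n\| \to d$, and the entire existence argument reduces to showing that this sequence converges to the sought point $p$.

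The heart of the argument — and what I expect to be the main obstacle — is proving that $\{c_n\}$ is Cauchy. Here I would invoke the parallelogram identity $\|a+b\|^2 + \|a-b\|^2 = 2\|a\|^2 + 2\|b\|^2$, valid in any inner-product space, applied to $a = x - c_n$ and $b = x - c_m$. This yields
\[
\|c_m - c_n\|^2 = 2\|x - c_n\|^2 + 2\|x - c_m\|^2 - 4\left\|x - \tfrac{c_n + c_m}{2}\right\|^2 .
\]
The crucial use of convexity enters now: since $\tfrac{c_n + c_m}{2} \in \mathcal{C}$, the last term is at least $4d^2$, so the right-hand side is bounded above by $2\|x - c_n\|^2 + 2\|x - c_m\|^2 - 4d^2$, which tends to $2d^2 + 2d^2 - 4d^2 = 0$ as $n, m \to \infty$. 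Hence $\{c_n\}$ is Cauchy. This coupling of the parallelogram law with convexity is the single nontrivial idea on which the whole theorem rests.

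To finish existence, I would appeal to the completeness of $\mathcal{H}$ to obtain a limit $p = \lim_n c_n$, use the closedness of $\mathcal{C}$ to conclude $p \in \mathcal{C}$, and use the continuity of the norm to deduce $\|x - p\| = d$; thus $p$ realizes the infimum. For uniqueness, I would suppose $p_1, p_2 \in \mathcal{C}$ both attain $d$ and apply the same parallelogram identity to $x - p_1$ and $x - p_2$; convexity again forces $\left\|x - \tfrac{p_1 + p_2}{2}\right\| \ge d$, giving $\|p_1 - p_2\|^2 \le 2d^2 + 2d^2 - 4d^2 = 0$, so $p_1 = p_2$. In this way existence and uniqueness follow from the one recurring estimate, which is why I flag the parallelogram-plus-convexity step as the core difficulty rather than the routine limiting and continuity arguments that surround it.
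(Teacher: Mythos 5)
Your proposal is correct: the minimizing-sequence argument, the parallelogram law combined with convexity to get the Cauchy estimate, completeness of $\mathcal{H}$ and closedness of $\mathcal{C}$ for existence, and the same identity reused for uniqueness is precisely the classical proof of the Hilbert projection theorem. The paper itself offers no argument here — it only cites Luenberger's book — and the proof given in that reference is this same parallelogram-law argument, so your blind reconstruction coincides with the intended proof.
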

\begin{proof}
See Ref.~\cite{Luenberger-Wiley-1969}. 
\end{proof}

\begin{theorem}[Dykstra's Method]\label{Dykstra_Method}
Let ${\mathcal{C}_1}$, ${\mathcal{C}_2}$, $\cdots$, ${\mathcal{C}_r} \subseteq \mathcal{H}$ be closed convex sets and $\mathcal{C}:=\bigcap\limits_{i = 1}^r {{\mathcal{C}_i}}$. If $\mathcal{C} \ne \emptyset$, then given $x \in \mathcal{H}$ iterated by:
\begin{equation}
\left\{ \begin{array}{l}
 x_n^i: = {P_{{\mathcal{C}_i}}}(x_n^{i - 1} - I_{n - 1}^i) \\
 I_n^i: = x_n^i - (x_n^{i - 1} - I_{n - 1}^i) \\
 x_n^0: = x_{n - 1}^r \\
 \end{array} \right.,
\end{equation}
with initial values $x_1^0: = {x},\;I_0^i: = 0$, there holds:
\begin{equation}
\mathop {\lim }\limits_{n \to \infty } {x_n} = {P_\mathcal{C}}(x).
\end{equation}
\end{theorem}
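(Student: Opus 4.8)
\emph{Proof strategy (proposal).} The plan is to follow the classical Boyle--Dykstra argument, whose engine is the variational (obtuse-angle) form of metric projection onto a closed convex set combined with a telescoping energy estimate. Since in our application $\mathcal{H}$ is the finite-dimensional space of $N\times N$ matrices under the Frobenius inner product, compactness is available; I therefore carry out the plan in that setting and flag where the general Hilbert-space case would need more care.

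First I would sharpen \Cref{projection_theorem} into its variational characterization: $p=P_{\mathcal{C}}(y)$ if and only if $p\in\mathcal{C}$ and $\langle y-p,\,c-p\rangle\le 0$ for all $c\in\mathcal{C}$. Applying this to the half-step $x_n^i=P_{\mathcal{C}_i}(x_n^{i-1}-I_{n-1}^i)$ and substituting $I_n^i=x_n^i-(x_n^{i-1}-I_{n-1}^i)$, the vector removed by the projection is exactly $-I_n^i$, so I obtain the cyclic inequality $\langle I_n^i,\,c-x_n^i\rangle\ge 0$ valid for every $c\in\mathcal{C}_i$, hence for every $c\in\mathcal{C}$. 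These inequalities are the ingredient that distinguishes Dykstra's correction from plain alternating projections, and they drive everything that follows.

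The heart of the proof is an energy identity. Fixing any $c\in\mathcal{C}$ and applying the three-point identity $\|a-c\|^2=\|a-b\|^2+\|b-c\|^2+2\langle a-b,\,b-c\rangle$ to each half-step with $a=x_n^{i-1}-I_{n-1}^i$ and $b=x_n^i$ gives $\|x_n^i-c\|^2=\|x_n^{i-1}-I_{n-1}^i-c\|^2-\|I_n^i\|^2-2\langle I_n^i,\,c-x_n^i\rangle$. Summing over a full sweep $i=1,\dots,r$ and over cycles $n=1,\dots,N$, the memory terms $I_{n-1}^i$ accumulate and rearrange into an identity of the shape $\|x-c\|^2=\|x_N^r-c\|^2+\Sigma_N$, where $\Sigma_N\ge 0$ collects squared increments together with the cross terms, whose definite sign is supplied by the inequality from the previous step. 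Hence $\Sigma_N\le\|x-c\|^2$ uniformly in $N$, the series converges, and therefore $\|x_n^i-x_n^{i-1}\|\to 0$ for each $i$ while the whole trajectory stays bounded. I expect this bookkeeping---tracking the correction terms so that the cross terms emerge with the correct sign after summation---to be the main obstacle, since it is exactly the point where Dykstra's scheme departs from the naive one.

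Finally I would collect the consequences. Boundedness and finite dimensionality yield a convergent subsequence with some limit $x^\ast$; because all consecutive increments vanish, every partial sequence $\{x_n^i\}_n$ shares the limit $x^\ast$, and since $x_n^i\in\mathcal{C}_i$ with each $\mathcal{C}_i$ closed, $x^\ast\in\bigcap_{i=1}^r\mathcal{C}_i=\mathcal{C}$. To identify the limit, I pass to the limit in the accumulated variational inequalities to obtain $\langle x-x^\ast,\,c-x^\ast\rangle\le 0$ for all $c\in\mathcal{C}$; by the variational characterization this is precisely $x^\ast=P_{\mathcal{C}}(x)$, which is unique by \Cref{projection_theorem}, so the entire sequence (not merely a subsequence) converges to $P_{\mathcal{C}}(x)$. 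In an arbitrary Hilbert space the compactness step furnishes only weak limits and one must additionally upgrade to norm convergence via the estimate in the Boyle--Dykstra analysis; in the finite-dimensional matrix setting used here that difficulty does not arise.
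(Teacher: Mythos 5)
The paper itself offers no proof of this theorem---it is stated with a pointer to Boyle--Dykstra (1986) and two later surveys---so what you are reconstructing is exactly the classical argument those references contain, and the first two stages of your plan are faithful to it and correct: the cyclic variational inequality $\langle I_n^i,\,c-x_n^i\rangle\ge 0$ for all $c\in\mathcal{C}_i$, and the telescoped energy identity $\|x-c\|^2=\|x_N^r-c\|^2+\Sigma_N$ with $\Sigma_N\ge 0$ (after Abel summation the cross terms take the form $\langle I_m^i,\,x_{m+1}^i-x_m^i\rangle$, nonnegative precisely because $x_{m+1}^i\in\mathcal{C}_i$), which yields boundedness of the trajectory and square-summability of the increments.

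The endgame, however, has a genuine gap, and it sits exactly where Dykstra's theorem is hard. First, ``pass to the limit in the accumulated variational inequalities'' is not routine: writing $x-x_n^r=-\sum_i I_n^i$, the estimate you need is $\langle x-x_n^r,\,c-x_n^r\rangle\le\sum_i\|I_n^i\|\,\|x_n^i-x_n^r\|$, and square-summability of increments only gives $\|I_n^i\|\le\sum_{m\le n}\|x_m^i-x_m^{i-1}\|=O(\sqrt{n})$; the correction terms need not be bounded (and in general are not), so ``increments $\to 0$'' alone does not make these products vanish. The classical proof needs two further observations: (i) $\liminf_n\, n\sum_i\|x_n^i-x_n^{i-1}\|^2=0$, which supplies a special subsequence $n_k$ along which $\sqrt{n_k}\,\|x_{n_k}^i-x_{n_k}^{i-1}\|\to 0$ and hence the identification goes through, and (ii) the full sum $\sum_i I_n^i=x_n^r-x$ is bounded even though its individual terms may not be. Second, your closing inference---that uniqueness of $P_{\mathcal{C}}(x)$ (Theorem~\ref{projection_theorem}) upgrades subsequential convergence to convergence of the whole sequence---is a non sequitur: uniqueness of the projection does not exclude other subsequential limits, because your identification argument is available only along the special subsequence of (i), not along arbitrary ones. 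The standard repair is again the energy identity: $\Phi_n(c):=\|x_n^r-c\|^2+2\sum_i\langle I_n^i,\,c-x_n^i\rangle=\|x-c\|^2-A_n$ with $A_n$ nondecreasing and $\Phi_n(c)\ge 0$, so $\Phi_n(c)$ converges; taking $c=P_{\mathcal{C}}(x)$ and evaluating along the special subsequence (where both summands of $\Phi_{n_k}$ tend to $0$, the second because $\sum_i I_{n_k}^i=x_{n_k}^r-x$ is bounded) shows $\lim_n\Phi_n=0$, and $\|x_n^r-c\|^2\le\Phi_n(c)$ then forces the entire sequence to converge. Finally, note that the theorem as stated is for an arbitrary Hilbert space $\mathcal{H}$; your compactness-based argument proves only the finite-dimensional case, which suffices for the paper's application in Theorem~\ref{closed_convex_sets} and Algorithm~\ref{algo:updatingS} but is strictly weaker than the claim.
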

\begin{proof}
See Refs.~\cite{James-statistics-1986}, \cite{Marcos-Mathematics-2011} and \cite{Matthew-Mathematical-AlternatingProjection-2012}. 
\end{proof}

\begin{theorem}\label{closed_convex_sets}
Let $\mathcal{H}=\mathbb{R}^{N \times N}$ and $\mathcal{S} = \{ \mathbf{M} \in \mathcal{H}|\mathbf{M} = {\mathbf{M}^T},\;\mathbf{M}\mathbf{1} = \mathbf{1},\;\mathbf{M} \ge 0\}$, then $\mathcal{S}$ is a closed convex set. 
\end{theorem}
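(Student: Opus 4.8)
The plan is to express $\mathcal{S}$ as an intersection of three simpler sets and then invoke the elementary fact that any intersection of closed convex sets is again closed and convex. Concretely, I would write $\mathcal{S} = \mathcal{S}_1 \cap \mathcal{S}_2 \cap \mathcal{S}_3$, where $\mathcal{S}_1 = \{ \mathbf{M} \in \mathcal{H} \,|\, \mathbf{M} = \mathbf{M}^T \}$ is the subspace of symmetric matrices, $\mathcal{S}_2 = \{ \mathbf{M} \in \mathcal{H} \,|\, \mathbf{M}\mathbf{1} = \mathbf{1} \}$ encodes the row-sum constraint, and $\mathcal{S}_3 = \{ \mathbf{M} \in \mathcal{H} \,|\, \mathbf{M} \ge 0 \}$ is the non-negative orthant. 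That this decomposition is exact is immediate from the three defining conditions of $\mathcal{S}$. This partition is also precisely the one that makes Theorem~\ref{Dykstra_Method} applicable to the projection subproblem~(\ref{subproblem_update_S_final}), so establishing it here is exactly what the subsequent optimization requires.

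Next I would verify convexity of each piece. For $\mathcal{S}_1$, since transposition is linear, a convex combination $t\mathbf{M}_1 + (1-t)\mathbf{M}_2$ of symmetric matrices satisfies $(t\mathbf{M}_1 + (1-t)\mathbf{M}_2)^T = t\mathbf{M}_1 + (1-t)\mathbf{M}_2$ and thus stays in $\mathcal{S}_1$. For $\mathcal{S}_2$, the map $\mathbf{M} \mapsto \mathbf{M}\mathbf{1}$ is linear, so a convex combination of matrices each sending $\mathbf{1}$ to $\mathbf{1}$ again sends $\mathbf{1}$ to $\mathbf{1}$. For $\mathcal{S}_3$, because the coefficients $t$ and $1-t$ are non-negative for $t \in [0,1]$, an element-wise non-negative convex combination of element-wise non-negative matrices remains non-negative. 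Hence all three sets are convex, and so is their intersection.

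For closedness, I would note that each set is the preimage of a closed set under a continuous map. $\mathcal{S}_1$ is the kernel of the continuous linear operator $\mathbf{M} \mapsto \mathbf{M} - \mathbf{M}^T$ and is therefore closed; $\mathcal{S}_2$ is the preimage of the closed singleton $\{\mathbf{1}\}$ under the continuous map $\mathbf{M} \mapsto \mathbf{M}\mathbf{1}$; and $\mathcal{S}_3$ is the intersection of the closed half-spaces $\{ M_{ij} \ge 0 \}$ over all $i,j$. Since an intersection of closed sets is closed, $\mathcal{S}$ is closed, which together with the convexity established above completes the proof.

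I do not expect any genuine obstacle here: the whole argument is a routine verification once the decomposition into the three constituent constraint sets is made explicit. The only point meriting a little care is to confirm \emph{both} properties (closedness and convexity) for \emph{all three} pieces, rather than silently dropping one of the three constraints; after that bookkeeping, the standard intersection lemmas finish the proof.
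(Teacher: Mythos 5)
Your proof is correct, and it is exactly the routine verification the paper has in mind: the paper's own ``proof'' of this theorem is just the sentence ``This can be easily verified,'' so your decomposition into the symmetric subspace, the affine row-sum constraint, and the non-negative orthant, each shown closed and convex, supplies precisely the omitted details. One small inaccuracy in your side remark: the partition the paper actually feeds into Dykstra's method is a \emph{two}-set split, $\mathcal{C}_1 = \{\mathbf{S} \mid \mathbf{S} = \mathbf{S}^T,\ \mathbf{S}\mathbf{1} = \mathbf{1}\}$ (symmetry and row-sums combined, so that the closed-form projection of Theorem~\ref{P_C_1_Projection} applies) and $\mathcal{C}_2 = \{\mathbf{S} \mid \mathbf{S} \ge 0\}$, not your three-set decomposition; your finer split would still make Dykstra's theorem applicable, but it is not the one the paper uses, and this has no bearing on the correctness of your proof of the present theorem.
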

\begin{proof}
This can be easily verified. 
\end{proof}

\begin{algorithm}[t]
\caption{Optimizing $\mathbf{S}$ while keeping $\mathbf{P}$ fixed}\label{algo:updatingS}
\KwIn{Dataset $\{ {x_i}\} _{i = 1}^N$, indicator matrix $\mathbf{P}$, hyper-parameters $\alpha$, $\beta$.}
\KwOut{Similarity matrix $\mathbf{S}$.}
\Begin
{
    Initialize $x_1^0: = {\mathbf{T}}$, $I_0^1: = 0$, $I_0^2: = 0$, $n=1$\;
    \Repeat{convergence}{
      Calculate $x_n^1: = {P_{{C_1}}}(x_n^{0})$\ and $x_n^2: = {P_{{C_2}}}(x_n^{1})$\;
      Calculate $I_n^1: = x_n^1 - (x_n^{0} - I_{n - 1}^1)$ and $I_n^2: = x_n^2 - (x_n^{1} - I_{n - 1}^2)$\;
      Iterative index $n=n+1$\;
      $x_n^0: =x_{n-1}^2$\;
    }
}
\end{algorithm}

Now, let's go back to subproblem~(\ref{subproblem_update_S_final}). Firstly, Theorem~\ref{projection_theorem} and Theorem~\ref{closed_convex_sets} together tell us that there must be one global and unique $\mathbf{S}$ for the optimization problem~(\ref{subproblem_update_S_final}).
Then, inspired by the Theorem~\ref{Dykstra_Method}, we could turn~(\ref{subproblem_update_S_final}) into:
\begin{equation}\label{sub_optimization_PC1}
x_n^1: = {P_{{\mathcal{C}_1}}}(x_n^0),
\end{equation}
and
\begin{equation}\label{sub_optimization_PC2}
x_n^2: = {P_{{\mathcal{C}_2}}}(x_n^1),
\end{equation}
where ${\mathcal{C}_1} = \{\mathbf{S}|\mathbf{S}= {\mathbf{S}^T},\;\mathbf{S}\mathbf{1} = \mathbf{1}\}$, ${\mathcal{C}_2} = \{\mathbf{S}|\mathbf{S} \ge 0\}$, $x_1^0: = {\mathbf{T}}$, $x_n^0: = x_{n-1}^2$, and $n=1,2, \cdots$ denotes the iterative index. The solutions to the two sub-problems \eqref{sub_optimization_PC1} and \eqref{sub_optimization_PC2} could be achieved by Theorem~\ref{P_C_1_Projection} and Theorem~\ref{P_C_2_Projection}, respectively.

\begin{theorem}\label{P_C_1_Projection}
Given any point $\mathbf{T}_v$, the global optimal solution to $P_{\mathcal{C}_1}(\mathbf{T}_v)$ is
\begin{equation}
\mathbf{K} + \frac{{N + {\mathbf{1}^T}\mathbf{K}\mathbf{1}}}{{{N^2}}}{\mathbf{1}\mathbf{1}^T} - \frac{1}{N}\mathbf{K}{\mathbf{1}\mathbf{1}^T} - \frac{1}{N}{\mathbf{1}\mathbf{1}^T}\mathbf{K},
\end{equation}
where $\mathbf{K} = \frac{{\mathbf{T}_v + {\mathbf{T}_v^T}}}{2}$.
\end{theorem}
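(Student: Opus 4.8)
The plan is to recognize $P_{\mathcal{C}_1}(\mathbf{T}_v)$ as the metric projection of $\mathbf{T}_v$ onto the affine set $\mathcal{C}_1 = \{\mathbf{S}\mid \mathbf{S} = \mathbf{S}^T,\ \mathbf{S}\mathbf{1} = \mathbf{1}\}$ and to compute it in closed form by Lagrange multipliers. First I would dispose of the symmetry constraint through the Frobenius-orthogonal splitting $\mathbf{T}_v = \mathbf{K} + \mathbf{R}$, where $\mathbf{K} = \frac{\mathbf{T}_v + \mathbf{T}_v^T}{2}$ is symmetric and $\mathbf{R} = \frac{\mathbf{T}_v - \mathbf{T}_v^T}{2}$ is anti-symmetric. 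For any symmetric $\mathbf{S}$, since $\mathbf{S}-\mathbf{K}$ is symmetric and $\mathbf{R}$ anti-symmetric, the cross term in the inner product vanishes and the Pythagorean identity $||\mathbf{S} - \mathbf{T}_v||_F^2 = ||\mathbf{S} - \mathbf{K}||_F^2 + ||\mathbf{R}||_F^2$ holds. The constant $||\mathbf{R}||_F^2$ is irrelevant to the minimization, so the problem reduces to minimizing $\frac{1}{2}||\mathbf{S} - \mathbf{K}||_F^2$ over symmetric $\mathbf{S}$ subject only to $\mathbf{S}\mathbf{1} = \mathbf{1}$.

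Next I would introduce a multiplier $\mathbf{y} \in \mathbb{R}^N$ for the constraint $\mathbf{S}\mathbf{1} - \mathbf{1} = 0$. The subtle point, and the part I expect to require the most care, is that stationarity must be imposed \emph{within} the symmetric subspace: the Euclidean gradient $\mathbf{y}\mathbf{1}^T$ of $\mathbf{y}^T\mathbf{S}\mathbf{1}$ is not symmetric, so what enters the optimality condition is its projection onto the symmetric matrices, namely $\frac{\mathbf{y}\mathbf{1}^T + \mathbf{1}\mathbf{y}^T}{2}$. This gives the stationary form $\mathbf{S} = \mathbf{K} + \frac{\mathbf{y}\mathbf{1}^T + \mathbf{1}\mathbf{y}^T}{2}$, which is automatically symmetric. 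Using the raw $\mathbf{y}\mathbf{1}^T$ instead would yield a non-symmetric candidate; avoiding this symmetrization trap is the crux of the argument.

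It then remains to pin down $\mathbf{y}$. Substituting the stationary form into $\mathbf{S}\mathbf{1} = \mathbf{1}$ and using $\mathbf{1}^T\mathbf{1} = N$ yields the linear relation $\mathbf{K}\mathbf{1} + \frac{N\mathbf{y} + (\mathbf{1}^T\mathbf{y})\mathbf{1}}{2} = \mathbf{1}$. Left-multiplying by $\mathbf{1}^T$ reduces this to a scalar equation for $s := \mathbf{1}^T\mathbf{y}$, solved by $s = 1 - \frac{\mathbf{1}^T\mathbf{K}\mathbf{1}}{N}$; back-substitution then determines $\mathbf{y}$ explicitly. Finally I would insert $\mathbf{y}$ into $\mathbf{S} = \mathbf{K} + \frac{\mathbf{y}\mathbf{1}^T + \mathbf{1}\mathbf{y}^T}{2}$ and collect the resulting rank-one terms $\mathbf{1}\mathbf{1}^T$, $\mathbf{K}\mathbf{1}\mathbf{1}^T$, and $\mathbf{1}\mathbf{1}^T\mathbf{K}$; the scalar coefficient of $\mathbf{1}\mathbf{1}^T$ consolidates to $\frac{N + \mathbf{1}^T\mathbf{K}\mathbf{1}}{N^2}$, reproducing the stated expression. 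Global optimality and uniqueness require no separate argument: $\mathcal{C}_1$ is a nonempty closed convex (indeed affine) set, so Theorem~\ref{projection_theorem}, together with the closed-convex structure verified along the lines of Theorem~\ref{closed_convex_sets}, guarantees that this unique stationary point is the global minimizer.
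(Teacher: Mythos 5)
Your proof is correct, and it is genuinely different from what the paper offers: the paper does not prove Theorem~\ref{P_C_1_Projection} at all, but simply defers to the external reference \cite{Xiaoqian-KDD-2016} (the SDS paper). Your derivation fills that gap with a self-contained argument, and every step checks out. The Pythagorean splitting $\mathbf{T}_v = \mathbf{K} + \mathbf{R}$ correctly eliminates the anti-symmetric part (the cross term $tr\bigl((\mathbf{S}-\mathbf{K})\mathbf{R}\bigr)$ vanishes because the trace of a product of a symmetric and an anti-symmetric matrix is zero), and you correctly identify the one genuinely delicate point: stationarity must be taken within the subspace of symmetric matrices, so the multiplier term enters as the symmetrized rank-two matrix $\frac{\mathbf{y}\mathbf{1}^T + \mathbf{1}\mathbf{y}^T}{2}$ rather than the raw $\mathbf{y}\mathbf{1}^T$. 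Your scalar equation $\mathbf{1}^T\mathbf{K}\mathbf{1} + Ns = N$, its solution $s = 1 - \frac{\mathbf{1}^T\mathbf{K}\mathbf{1}}{N}$, and the back-substitution
\begin{equation*}
\mathbf{y} = \frac{1}{N}\mathbf{1} + \frac{\mathbf{1}^T\mathbf{K}\mathbf{1}}{N^2}\mathbf{1} - \frac{2}{N}\mathbf{K}\mathbf{1}
\end{equation*}
do reproduce exactly the claimed coefficient $\frac{N + \mathbf{1}^T\mathbf{K}\mathbf{1}}{N^2}$ on $\mathbf{1}\mathbf{1}^T$ together with the $-\frac{1}{N}\mathbf{K}\mathbf{1}\mathbf{1}^T - \frac{1}{N}\mathbf{1}\mathbf{1}^T\mathbf{K}$ terms. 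Your closing appeal to convexity is also sound, with one small refinement worth stating explicitly: $\mathcal{C}_1$ is an affine set (nonempty, since $\mathbf{I} \in \mathcal{C}_1$), hence closed and convex, so by Theorem~\ref{projection_theorem} the projection exists and is unique, and because the objective is strictly convex and the constraints are affine, the unique Lagrangian stationary point you found is necessarily that global minimizer --- no Slater-type condition or further verification is needed. In short, where the paper buys brevity by citation, your argument buys verifiability: a reader of the paper must trust or chase down \cite{Xiaoqian-KDD-2016}, whereas your derivation can be checked line by line with nothing beyond elementary matrix calculus.
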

\begin{proof}
See Ref.~\cite{Xiaoqian-KDD-2016}. 
\end{proof}

\begin{theorem}\label{P_C_2_Projection}
Given any point $\mathbf{T}_v$, the global optimal solution to $P_{\mathcal{C}_2}(\mathbf{T}_v)$ is
\begin{equation}
[\mathbf{T}_v]_{+},
\end{equation}
where ${[\cdot]_ + }$ is an element-wise non-negative operation.
\end{theorem}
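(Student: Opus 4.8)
The plan is to exploit the fact that, in the projection problem $\min_{\mathbf{S} \ge 0} ||\mathbf{S} - \mathbf{T}_v||_F^2$, both the objective and the constraint are completely separable across the entries of the matrix. First I would invoke Theorem~\ref{projection_theorem}: since $\mathcal{C}_2 = \{ \mathbf{S} \in \mathbb{R}^{N \times N} \,|\, \mathbf{S} \ge 0 \}$ is a closed convex set (being the non-negative orthant, i.e. the intersection of the closed half-spaces $\{ \mathbf{S}_{ij} \ge 0 \}$), the metric projection onto $\mathcal{C}_2$ exists and is unique. It therefore suffices to exhibit a single feasible point that attains the infimum, and uniqueness will guarantee it is the projection.

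Next I would write the squared Frobenius distance entrywise,
\[
||\mathbf{S} - \mathbf{T}_v||_F^2 = \sum_{i,j} \bigl( \mathbf{S}_{ij} - (\mathbf{T}_v)_{ij} \bigr)^2,
\]
and observe that because the constraint $\mathbf{S} \ge 0$ decouples into the independent scalar constraints $\mathbf{S}_{ij} \ge 0$, the whole problem factors into $N^2$ independent one-dimensional problems of the form $\min_{s \ge 0} (s - t)^2$ with $t = (\mathbf{T}_v)_{ij}$.

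Then I would solve the scalar problem directly. The parabola $(s - t)^2$ has its unconstrained minimizer at $s = t$; if $t \ge 0$ this point is feasible and hence optimal, whereas if $t < 0$ the function is strictly increasing on $[0,\infty)$ and the constrained minimum is attained at the boundary $s = 0$. In either case the optimum is $s^\ast = \max(t, 0)$, the positive part of $t$. Assembling these entrywise optima yields $\mathbf{S}^\ast = [\mathbf{T}_v]_{+}$, which, by the uniqueness from Step~1, is exactly $P_{\mathcal{C}_2}(\mathbf{T}_v)$.

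The main obstacle here is essentially negligible: this is the textbook projection onto the non-negative orthant, and the only points needing care are the justification of the separation into scalar subproblems (valid precisely because both the Frobenius metric and the orthant constraint are coordinate-separable) and the boundary case $t < 0$ of the scalar minimization. As an alternative to the separation argument, one could instead verify the variational characterization of a convex projection, namely $\langle \mathbf{T}_v - \mathbf{S}^\ast, \mathbf{C} - \mathbf{S}^\ast \rangle \le 0$ for all $\mathbf{C} \in \mathcal{C}_2$; for $\mathbf{S}^\ast = [\mathbf{T}_v]_{+}$ this reduces entrywise to checking that $\bigl( t - [t]_{+} \bigr)\bigl( c - [t]_{+} \bigr) \le 0$ for every $c \ge 0$, which is immediate in both the cases $t \ge 0$ and $t < 0$.
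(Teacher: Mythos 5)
Your proof is correct: the paper itself offers no argument for this theorem (it is dismissed as ``This can be easily verified''), and your entrywise-separation argument, backed by existence/uniqueness from Theorem~\ref{projection_theorem}, is exactly the standard reasoning the paper implicitly relies on. The alternative check via the variational inequality $\langle \mathbf{T}_v - \mathbf{S}^\ast, \mathbf{C} - \mathbf{S}^\ast \rangle \le 0$ is a nice additional confirmation, but either route alone suffices.
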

\begin{proof}
This can be easily verified. 
\end{proof}

Up to present, we could achieve the optimal solution for problem~(\ref{subproblem_update_S_final}) through Dykstra's method, which is concluded in Algorithm~\ref{algo:updatingS}.

\subsection{Optimizing $\mathbf{P}$ while keeping $\mathbf{S}$ fixed}

It is straightforward to see that, when similarity matrix $\mathbf{S}$ is fixed, the optimization problem~(\ref{overall_objective_function}) could be reduced as:
\begin{equation}\label{subproblem_update_P}
\mathop {\min }\limits_{\mathbf{P}} \;\mathcal{O}(\mathbf{P}),\;\;\;s.t.\;\mathbf{P} \in {\mathbf{P}_{nlr}}.
\end{equation}

Regarding that $\mathbf{P}$ is constrained to be both orthogonal and non-negative in (\ref{subproblem_update_P}), it seems quite challenging to deal with such problem.
Subtly, since the similarity matrix $\mathbf{S}$ is a doubly stochastic matrix, then we could draw the following Theorem~\ref{P_transformer}.

\begin{theorem}[$\mathbf{P}$-Transformer]\label{P_transformer}
Given $\mathbf{S}$ is a doubly stochastic matrix, the subproblem~(\ref{subproblem_update_P}) could be converted to the following optimization problem:
\begin{equation}\label{equivalent_P}
\mathop {\min }\limits_{\mathbf{P}} \;||\mathbf{S} - {\mathbf{P}^T}\mathbf{P}||_F^2,\;\;s.t.\;\mathbf{P} \in {\mathbf{P}_{nlr}}.
\end{equation}
\end{theorem}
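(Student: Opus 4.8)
The plan is to show that, although the two problems carry different objective functions, their objective values coincide up to additive constants over the common feasible set $\mathbf{P}_{nlr}$, so that they share exactly the same minimizers. The argument is purely algebraic and hinges on using each hypothesis (double stochasticity of $\mathbf{S}$ and orthogonality of $\mathbf{P}$) at precisely the right moment.

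First I would invoke Theorem~\ref{ky_fans_theorem} to rewrite the objective as $\mathcal{O}(\mathbf{P}) = tr(\mathbf{P}\mathbf{L}_\mathbf{S}\mathbf{P}^T)$, with $\mathbf{L}_\mathbf{S} = diag(\mathbf{S}\mathbf{1}) - \mathbf{S}$. The crucial first simplification exploits the doubly stochastic hypothesis: since $\mathbf{S}\mathbf{1} = \mathbf{1}$, we have $diag(\mathbf{S}\mathbf{1}) = diag(\mathbf{1}) = \mathbf{I}_N$, so the Laplacian collapses to $\mathbf{L}_\mathbf{S} = \mathbf{I}_N - \mathbf{S}$. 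Substituting this and then applying the orthogonality constraint $\mathbf{P}\mathbf{P}^T = \mathbf{I}_C$ yields $\mathcal{O}(\mathbf{P}) = tr(\mathbf{P}\mathbf{P}^T) - tr(\mathbf{P}\mathbf{S}\mathbf{P}^T) = C - tr(\mathbf{P}\mathbf{S}\mathbf{P}^T)$. Hence minimizing $\mathcal{O}(\mathbf{P})$ over $\mathbf{P}_{nlr}$ is equivalent to maximizing $tr(\mathbf{P}\mathbf{S}\mathbf{P}^T)$.

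Next I would expand the target objective. Because both $\mathbf{S}$ and $\mathbf{P}^T\mathbf{P}$ are symmetric, $||\mathbf{S} - \mathbf{P}^T\mathbf{P}||_F^2 = tr(\mathbf{S}^2) - 2\,tr(\mathbf{S}\mathbf{P}^T\mathbf{P}) + tr(\mathbf{P}^T\mathbf{P}\mathbf{P}^T\mathbf{P})$. The middle trace $tr(\mathbf{S}\mathbf{P}^T\mathbf{P})$ equals $tr(\mathbf{P}\mathbf{S}\mathbf{P}^T)$ by the cyclic property, and the quartic term simplifies again via the constraint: $\mathbf{P}^T\mathbf{P}\mathbf{P}^T\mathbf{P} = \mathbf{P}^T(\mathbf{P}\mathbf{P}^T)\mathbf{P} = \mathbf{P}^T\mathbf{P}$, so $tr(\mathbf{P}^T\mathbf{P}\mathbf{P}^T\mathbf{P}) = tr(\mathbf{P}\mathbf{P}^T) = C$. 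Therefore $||\mathbf{S} - \mathbf{P}^T\mathbf{P}||_F^2 = tr(\mathbf{S}^2) + C - 2\,tr(\mathbf{P}\mathbf{S}\mathbf{P}^T)$, where $tr(\mathbf{S}^2)$ and $C$ are constants independent of $\mathbf{P}$.

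Finally, comparing the two reductions, minimizing either $\mathcal{O}(\mathbf{P})$ or $||\mathbf{S} - \mathbf{P}^T\mathbf{P}||_F^2$ over the same feasible set $\mathbf{P}_{nlr}$ is equivalent to maximizing $tr(\mathbf{P}\mathbf{S}\mathbf{P}^T)$; hence the two problems have identical minimizers, which is exactly the claimed conversion. I do not expect a genuinely hard step here — the difficulty is bookkeeping rather than analysis. The main obstacle is simply to deploy the two hypotheses at the right places: the reduction $diag(\mathbf{S}\mathbf{1}) = \mathbf{I}_N$ relies essentially on double stochasticity, and the collapse of the quartic trace term relies essentially on orthogonality $\mathbf{P}\mathbf{P}^T = \mathbf{I}_C$. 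If either hypothesis were dropped, the two objectives would no longer differ by a mere constant and the equivalence would break down, so both conditions of the theorem are genuinely needed.
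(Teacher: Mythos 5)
Your proof is correct and follows essentially the same route as the paper's: use double stochasticity to collapse the Laplacian to $\mathbf{I}_N - \mathbf{S}$, use orthogonality to turn both objectives into maximizing $tr(\mathbf{P}\mathbf{S}\mathbf{P}^T)$ up to additive constants, and conclude the two problems share minimizers. If anything, your version is slightly more careful than the paper's, which writes $\mathbf{L}_\mathbf{S}=\mathbf{I}_C-\mathbf{S}$ (a typo for $\mathbf{I}_N$) and glosses the constant-shift argument with ``$\propto$'' where you make it explicit.
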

\begin{proof}
Since $\mathbf{S} \in {\mathbf{S}_{dsm}}$, then there is $\mathbf{L}_\mathbf{S} = \mathbf{I}_C -\mathbf{S}$, which is followed by:
\begin{equation}\nonumber
\begin{aligned}
 tr(\mathbf{P} \cdot {\mathbf{L}_\mathbf{S}} \cdot {\mathbf{P}^T}) = C - tr(\mathbf{P} \cdot \mathbf{S} \cdot {\mathbf{P}^T}).
 \end{aligned}
\end{equation}

Then the objective function in~(\ref{subproblem_update_P}) could be further transformed into:
\begin{equation}
\begin{aligned}
 &  \mathop {\min }\limits_\mathbf{P}\;  tr(\mathbf{P} \cdot {\mathbf{L}_\mathbf{S}} \cdot {\mathbf{P}^T})   \\
\propto  &\mathop {\max }\limits_\mathbf{P}\; tr(\mathbf{P} \cdot \mathbf{S} \cdot {\mathbf{P}^T})    \\
\propto  &\mathop {\min }\limits_\mathbf{P}\; \{ tr(\mathbf{S} \cdot {\mathbf{S}^T})- 2tr(\mathbf{S} \cdot {\mathbf{P}^T}\mathbf{P}) + tr({\mathbf{P}^T}\mathbf{P}{\mathbf{P}^T}\mathbf{P})\}  \\
\propto  &\mathop {\min }\limits_\mathbf{P}\; ||\mathbf{S} - {\mathbf{P}^T}\mathbf{P}||_F^2.
 \end{aligned}
\end{equation}
Consequently, the two optimization problems~(\ref{subproblem_update_P}) and~(\ref{equivalent_P}) are equivalent. 
\end{proof}

In light of the optimization problem~(\ref{equivalent_P}), its augmented Lagrange function is displayed as:
\begin{equation}
\mathcal{L}(\mathbf{P},\Lambda ) = ||\mathbf{S} - {\mathbf{P}^T}\mathbf{P}||_F^2 + tr\{ \Lambda (\mathbf{P}{\mathbf{P}^T} - {\mathbf{I}_C})\},
\end{equation}
where $\Lambda$ denotes the Lagrange multiplier for the constraint $\mathbf{P}{\mathbf{P}^T}=\mathbf{I}_C$. With respect to the non-negative constraints, they could be ignored in the augmented Lagrange function when the multiplicative update philosophy \cite{Daniel-NMF-Algorithm-NIPS-2001,Ziyu-MV-NMF-TKDE-2015,Yu-Survry-NMF-TKDE-2013} is adopted, because the non-negativity for variable $\mathbf{P}$ is naturally maintained during iterative updatings. More specifically, the derivative of $\mathcal{L}(\mathbf{P},\Lambda )$ w.r.t. $\mathbf{P}$ can be formulated into two parts, i.e.,
\begin{equation}\label{derivative_over_P}
\begin{aligned}
\frac{{\partial \mathcal{L}(\mathbf{P},\Lambda )}}{{\partial \mathbf{P}}}&= 4\mathbf{P}{\mathbf{P}^T}\mathbf{P} - 2\mathbf{P}(\mathbf{S} + {\mathbf{S}^T}) + (\Lambda  + {\Lambda ^T})\mathbf{P}\\
&\buildrel \Delta \over = {\left( \mathcal{A} \right)_ + } - {\left( \mathcal{B} \right)_ + } ,
\end{aligned}
\end{equation}
where ${\left( \mathcal{A} \right)_ + }$ and ${\left( \mathcal{B} \right)_ + }$ both represent element-wise non-negative matrices.
Then the iterative formula for updating $\mathbf{P}$ could be written as:
\begin{equation}\label{iterative_P}
\mathbf{P} \leftarrow \mathbf{P} \otimes {\left( \mathcal{B} \right)_ + } \oslash {\left( \mathcal{A} \right)_ + },
\end{equation}
where $\otimes$ and $\oslash$ corresponds to the element-wise multiplication and element-wise division respectively \cite{Jiho-ONMF-IDEAL-2008}. Obviously, if the factors in Equation~(\ref{iterative_P}) are all non-negative, then the result will also hold non-negativity. In order to figure out the detailed formula for Equation~(\ref{iterative_P}), we have to derive $\Lambda$.

\begin{algorithm}[t]
\caption{Optimizing $\mathbf{P}$ while keeping $\mathbf{S}$ fixed}\label{algo:updatingP}
\KwIn{Similarity matrix $\mathbf{S}$ and  hyper-parameters $\lambda$, $\mu$}
\KwOut{Clustering indicator matrix $\mathbf{P}$.}
\Begin
{
    Randomly initialize clustering indicator matrix $\mathbf{P}$\;
    \Repeat{convergence}{
      Update $\mathbf{P}$ according to the strategic multiplicative update rule~(\ref{iterative_P_elementwise_combination})\;
    }
}
\end{algorithm}
Taking the partial derivatives of $\mathcal{L}(\mathbf{P},\Lambda)$ w.r.t. $\mathbf{P}$ and $\Lambda$ respectively, and setting them to zero by the Karush-Kuhn-Tucker conditions, we arrive at:
\begin{equation}\label{KKT_partial_L2P}
\frac{{\partial \mathcal{L}(\mathbf{P},\Lambda )}}{{\partial \mathbf{P}}} = 4\mathbf{P}{\mathbf{P}^T}\mathbf{P} - 2\mathbf{P}(\mathbf{S} + {\mathbf{S}^T}) + (\Lambda  + {\Lambda ^T})\mathbf{P} = 0,
\end{equation}
and
\begin{equation}\label{KKT_partial_L2Lambda}
\frac{{\partial \mathcal{L}(\mathbf{P},\Lambda )}}{{\partial \Lambda }} = \mathbf{P}{\mathbf{P}^T} - {\mathbf{I}_C} = 0.
\end{equation}

Multiplying factor $\mathbf{P}^T$ on both sides of Equation~(\ref{KKT_partial_L2P}) and then taking the Equation~(\ref{KKT_partial_L2Lambda}) into it, there comes:
\begin{equation}\label{KKT_derivation_Lambda}
\Lambda  + {\Lambda ^T} = 2\mathbf{P}(\mathbf{S} + {\mathbf{S}^T}){\mathbf{P}^T} - 4{\mathbf{I}_C}.
\end{equation}

Combining Equation~(\ref{KKT_derivation_Lambda}) with Equation~(\ref{derivative_over_P}), we could easily obtain:
\begin{equation}
\begin{split}
&{\left( \mathcal{A} \right)_ + } = 4\mathbf{P}{\mathbf{P}^T}\mathbf{P} + 2\mathbf{P}(\mathbf{S} + {\mathbf{S}^T}){\mathbf{P}^T}\mathbf{P};\;\;\; \\
&{\left( \mathcal{B} \right)_ + } = 2\mathbf{P}(\mathbf{S} + {\mathbf{S}^T}) + 4\mathbf{P}.\\
\end{split}
\end{equation}

Then the specific formula for updating $\mathbf{P}$ is expressed in the following:
\begin{equation}\label{iterative_P_matrix}
\begin{aligned}
\mathbf{P} \leftarrow \mathbf{P} &\otimes \{ \mathbf{P}(\mathbf{S} + {\mathbf{S}^T}) + 2\mathbf{P}\} \\
 &\oslash \{ 2\mathbf{P}{\mathbf{P}^T}\mathbf{P} + \mathbf{P}(\mathbf{S} + {\mathbf{S}^T}){\mathbf{P}^T}\mathbf{P}\} ,\\
\end{aligned}
\end{equation}
or in an element-wise version:
\begin{equation}\label{iterative_P_elementwise}
{\mathbf{P}_{ij}} \leftarrow {\mathbf{P}_{ij}}\frac{{{{\left[ {\mathbf{P}(\mathbf{S} + {\mathbf{S}^T}) + 2\mathbf{P}} \right]}_{ij}}}}{{{{\left[ {2\mathbf{P}{\mathbf{P}^T}\mathbf{P} + \mathbf{P}(\mathbf{S} + {\mathbf{S}^T}){\mathbf{P}^T}\mathbf{P}} \right]}_{ij}}}},
\end{equation}
where $i = 1,2, \cdots ,C$ and $j = 1,2, \cdots ,N.$
Inspired by Ref.~\cite{Zhaoshui-SymNMF-TNNLS-2011}, using Equation~(\ref{iterative_P_matrix} or \ref{iterative_P_elementwise}) to update $\mathbf{P}$ directly would yield unstable performance,
then we adopt the following ``strategic multiplicative updates'':

\begin{equation}\label{iterative_P_elementwise_combination}
{\mathbf{P}_{ij}}\leftarrow{\mathbf{P}_{ij}}{\left\{ {(1 - \lambda ) +  \lambda\mathbf{Q}} \right\}^\mu }
\end{equation}
where $\mathbf{Q}=\frac{{{{\left[ {\mathbf{P}(\mathbf{S} + {\mathbf{S}^T}) + 2\mathbf{P}} \right]}_{ij}}}}{{{{\left[ {2\mathbf{P}{\mathbf{P}^T}\mathbf{P} + \mathbf{P}(\mathbf{S} + {\mathbf{S}^T}){\mathbf{P}^T}\mathbf{P}} \right]}_{ij}}}}$, $\lambda$ and $\mu$ are set to $0.5$ and $0.9$ respectively in our experimental parts. Here we call the iterative rule (\ref{iterative_P_elementwise_combination}) ``Strategic Multiplicative Update''. Thus the procedure for optimizing $\mathbf{P}$ is summarized in Algorithm~\ref{algo:updatingP}.

\section{Experiments}
This section would empirically evaluate the effectiveness of RNSE for clustering on both \textit{synthetic} and \textit{real-world} datasets.

\subsection{Experiments on Synthetic Datasets}
The first synthetic dataset we constructed is a $1000\times1000$ matrix with four $250\times250$ block matrices diagonally arranged (Fig.~\ref{block-diagonal}). The data in each block denotes the affinity of any two points within one cluster while the data outside all blocks denotes noise. The affinity data within each block is randomly generated in the range from $0$ to $1$, while the noise is randomly generated in the range from $0$ to $c$, which is set as $0.2$, $0.4$ and $0.8$ respectively during the experiments. 

Fig.~\ref{block-diagonal} exhibits the original graphs and their corresponding clustering results under different noise (e.g. $c$) settings. We can see that RNSE overall presents good performances w.r.t. clustering task. Specifically, RNSE successfully learns a structured doubly stochastic matrix with explicit block structures, which divided the data samples into four clear clusters. As the noise increases, the block structure in the original graph blurs, but RNSE is still able to detect the intrinsic structures of the data, which indicates the robustness of the RNSE method for potential practical applications.

\begin{figure*}[htb]
\centering
\subfigure[Original Graph, noise = 0.2]{
\begin{minipage}[t]{0.33\linewidth}
\centering
\includegraphics[width=2.5in]{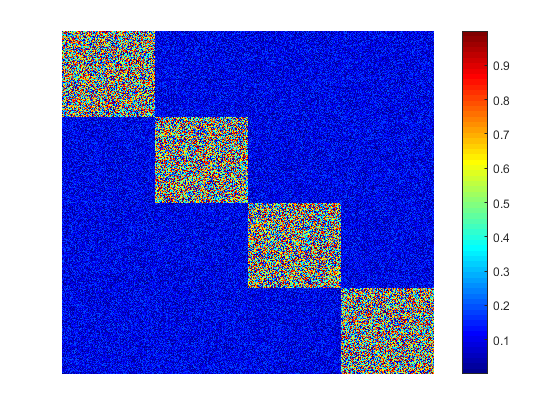}
\end{minipage}%
}%
\subfigure[Original Graph, noise = 0.4]{
\begin{minipage}[t]{0.33\linewidth}
\centering
\includegraphics[width=2.5in]{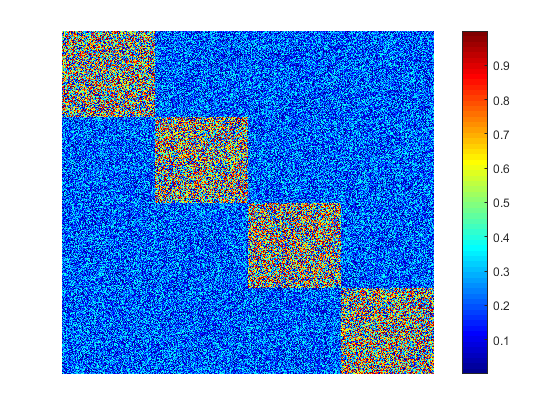}
\end{minipage}%
}%
\subfigure[Original Graph, noise = 0.8]{
\begin{minipage}[t]{0.33\linewidth}
\centering
\includegraphics[width=2.5in]{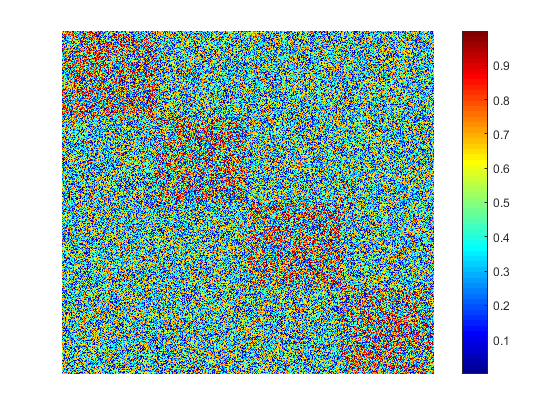}
\end{minipage}
}%

\centering
\subfigure[RNSE Result, noise = 0.2]{
\begin{minipage}[t]{0.33\linewidth}
\centering
\includegraphics[width=2.5in]{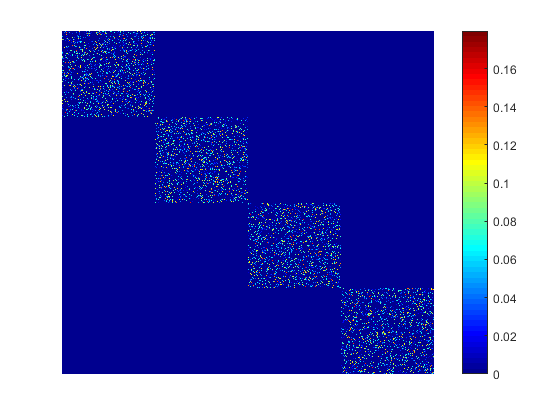}
\end{minipage}%
}%
\subfigure[RNSE Result, noise = 0.4]{
\begin{minipage}[t]{0.33\linewidth}
\centering
\includegraphics[width=2.5in]{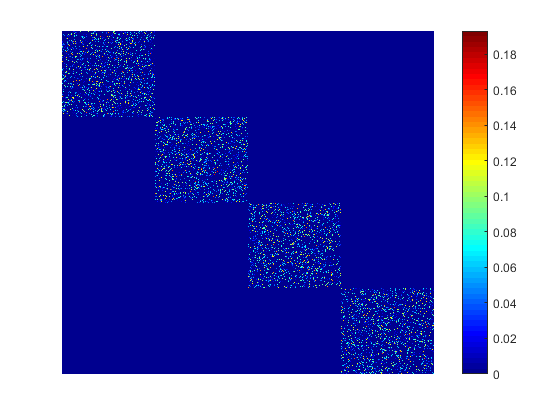}
\end{minipage}%
}%
\subfigure[RNSE Result, noise = 0.8]{
\begin{minipage}[t]{0.33\linewidth}
\centering
\includegraphics[width=2.5in]{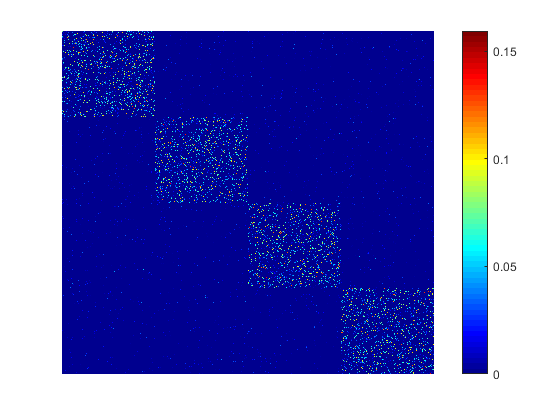}

\end{minipage}
}%

\centering
\caption{Clustering results on the \textit{block-diagonal} synthetic dataset.}
\label{block-diagonal}
\end{figure*}

The second synthetic dataset is a randomly generated two-moon dataset. There are two clusters with each being a volume of $100$ samples distributed in the moon shape (Fig.~\ref{two_moon}). Here, we tested K-Means\cite{Stuart-IEEE-K-means-1982}, Ncuts\cite{Jianbo-PAMI-N-cut-2000,Andrew-NIPS-SpectralClustering-2002} and RNSE on such dataset. Note that in this figure, the color of the two clusters are set to be red and blue, respectively; and the green lines denote the affinity of any two points. Obviously on the whole, Fig.~\ref{two_moon} tells us the RNSE's effectiveness. 

More specifically, some analysis could be drawn as follows. First, there are some points split into the wrong cluster w.r.t K-Means (Fig.~\ref{two_moon}(b)). It's easy to understand that K-Means mainly deals with ball-like distributed data points, which is obviously not fit for the manifold (e.g. two-moon) data points. Second, From Fig.~\ref{two_moon}(c), NCuts could well divide the data points into two separate clusters, but the green lines (affinities) between samples are mixed across different clusters, which indicates that the classic spectral clustering methods tends to mis-recognize neighbors. Third, RNSE (Fig.~\ref{two_moon}(d)), extended from the spectral family, could tell the differences  between both classes and neighbors. This implies that RNSE potentially holds stronger abilities than the classic spectral methods (e.g. NCuts) in handling complex datasets.

\begin{figure*}[htb]
\centering
\subfigure[Original Points]{
\begin{minipage}[t]{0.24\linewidth}
\centering
\includegraphics[width=1.9in]{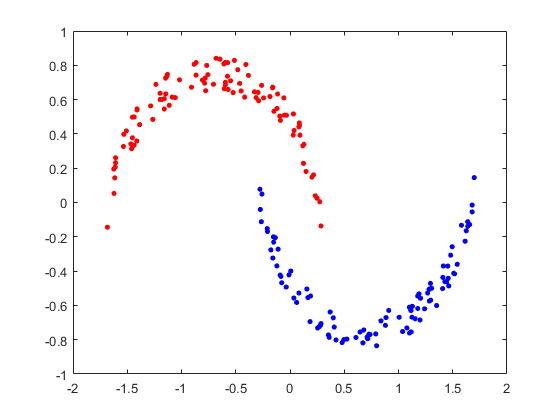}
\end{minipage}%
}%
\subfigure[K-Means Result]{
\begin{minipage}[t]{0.24\linewidth}
\centering
\includegraphics[width=1.9in]{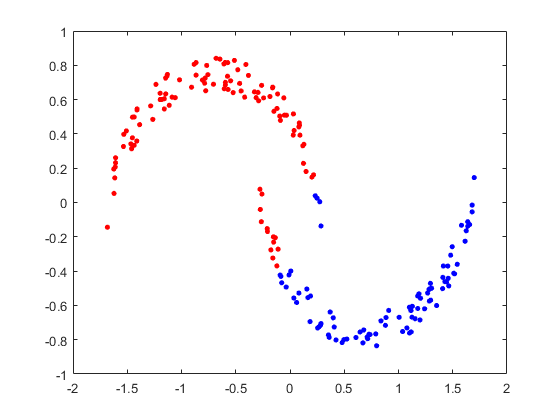}
\end{minipage}%
}%
\subfigure[NCuts Result]{
\begin{minipage}[t]{0.24\linewidth}
\centering
\includegraphics[width=1.9in]{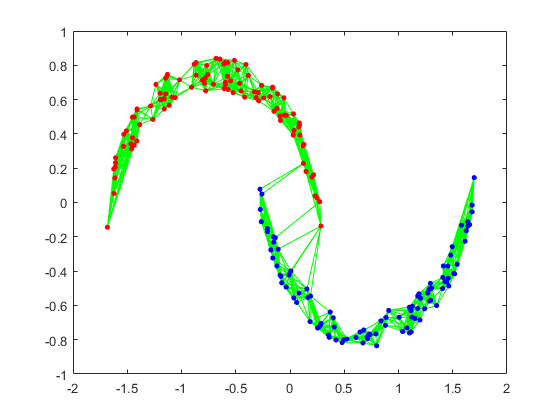}
\end{minipage}
}%
\centering
\subfigure[RNSE Result]{
\begin{minipage}[t]{0.24\linewidth}
\centering
\includegraphics[width=1.9in]{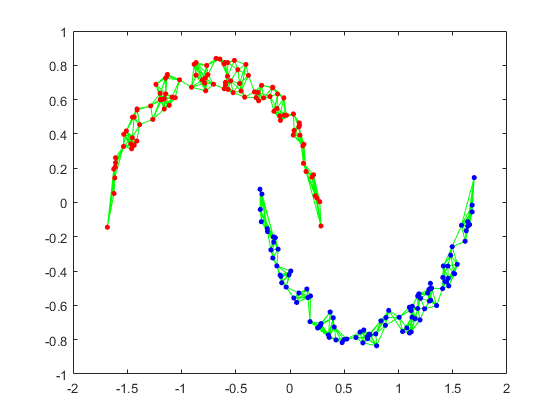}
\end{minipage}%
}%

\centering
\caption{Clustering results on the \textit{two-moon} synthetic dataset.}
\label{two_moon}
\end{figure*}

\begin{figure*}[htb]
\centering
\subfigure[diabetes]{
\begin{minipage}[t]{0.24\linewidth}
\centering
\includegraphics[width=1.7in]{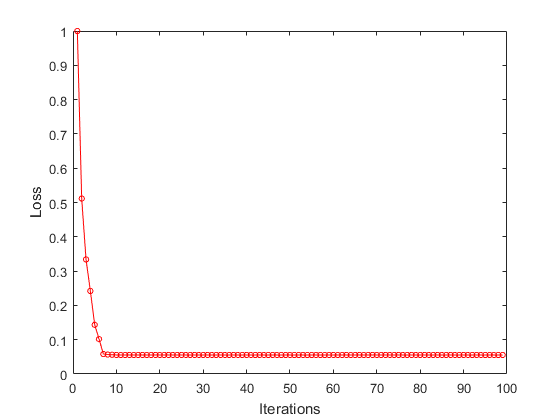}
\end{minipage}%
}%
\centering
\subfigure[arcene]{
\begin{minipage}[t]{0.24\linewidth}
\centering
\includegraphics[width=1.7in]{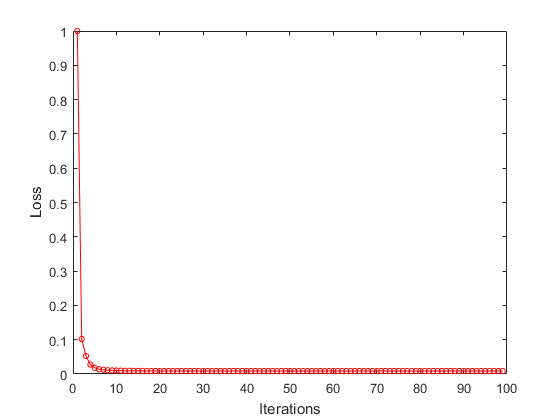}
\end{minipage}%
}%
\subfigure[mnist]{
\begin{minipage}[t]{0.24\linewidth}
\centering
\includegraphics[width=1.7in]{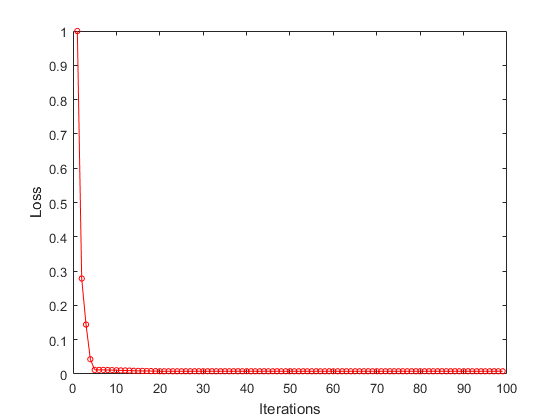}
\end{minipage}%
}%
\subfigure[alpha\_digit]{
\begin{minipage}[t]{0.24\linewidth}
\centering
\includegraphics[width=1.7in]{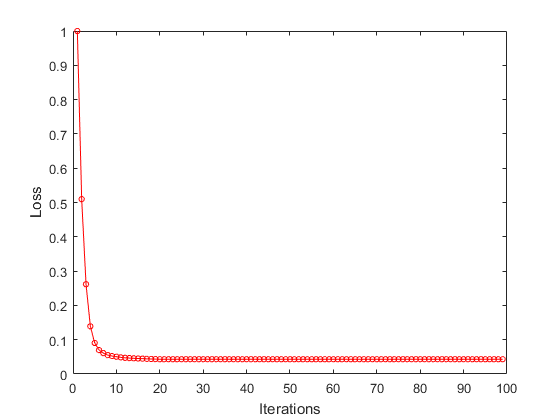}
\end{minipage}
}%

\centering
\subfigure[yeast\_uni]{
\begin{minipage}[t]{0.24\linewidth}
\centering
\includegraphics[width=1.7in]{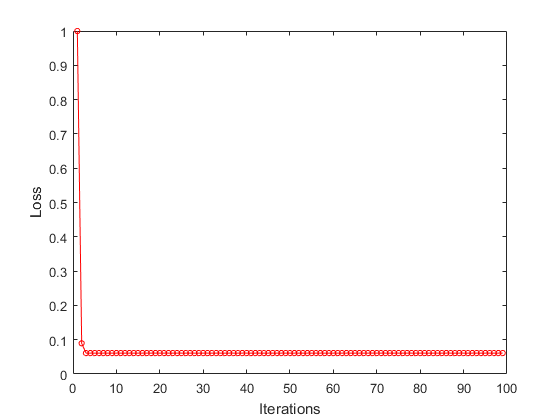}
\end{minipage}%
}%
\subfigure[PCMAC]{
\begin{minipage}[t]{0.24\linewidth}
\centering
\includegraphics[width=1.7in]{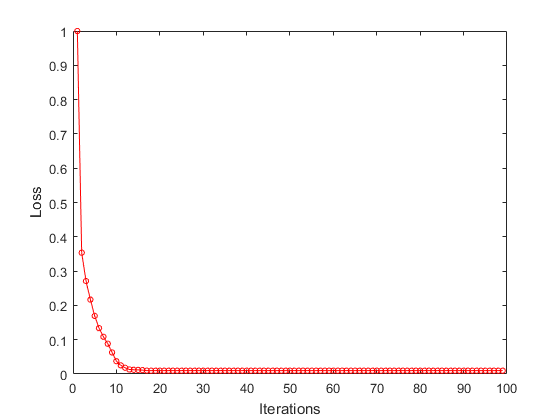}
\end{minipage}%
}%
\subfigure[waveform-21]{
\begin{minipage}[t]{0.24\linewidth}
\centering
\includegraphics[width=1.7in]{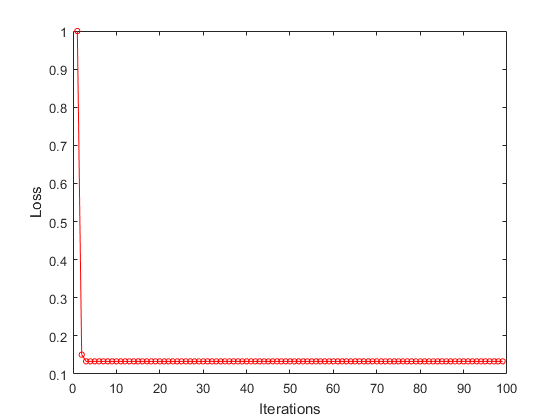}
\end{minipage}
}%
\subfigure[gisette]{
\begin{minipage}[t]{0.24\linewidth}
\centering
\includegraphics[width=1.7in]{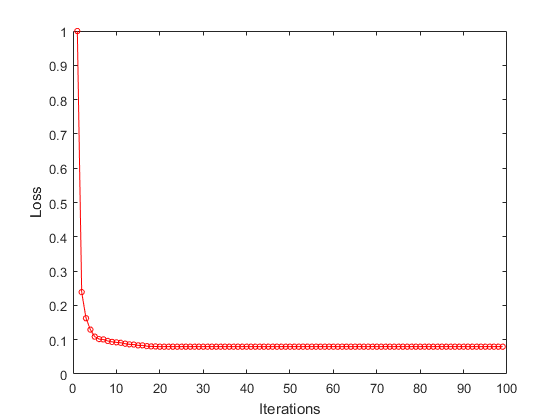}
\end{minipage}
}%
\caption{Convergence curves of RNSE on different datasets: the $y$-axis is the normalized objective function value (loss) and $x$-axis is the iteration number.}  
\label{convergence-curves}   
\end{figure*}

\subsection{Experiments on Real-world Datasets}
\textbf{Datasets.} Eight real-world datasets are selected in the clustering experiments. More specifically, the ``diabetes'', ``arcene'',``yeast\_uni'', ``waveform\-21'' and ``gisette'' are 5 publicly available collections from website\footnote{\url{https://archive.ics.uci.edu/ml/datasets.php}}; the ``PCMAC'' datasete is available from website\footnote{\url{http://featureselection.asu.edu/datasets.php}}; while the ``mnist'' and ``alpha-digit'' datasets are collected from Sam Roweis' page\footnote{\url{https://cs.nyu.edu/home/index.html}}. Note that we just select the top-100 samples of each digit (``0''$\sim$``9'') in ``mnist'' dataset for our experiments. The detailed statistics are summarized in Table~\ref{datasets-statiscits}.

\textbf{Competing Methods.} To demonstrate the effectiveness of RNSE, we compare it with several popular clustering algorithms, i.e.,
(1) Canonical K-Means (K-Means)\cite{Stuart-IEEE-K-means-1982}\footnote{\url{http://scikit-learn.org/stable/modules/generated/sklearn.cluster.KMeans}},
(2) Principal Component Analysis (PCA)\cite{SVANTE-Chemometrics-PCA-1987}\footnote{\url{http://www.cad.zju.edu.cn/home/dengcai/Data/DimensionReduction.html}},
(3) K-Means clustering in the Low-Rank subspaces (LRR)\cite{Guangcan-PAMI-LRR-2013}\footnote{\url{https://sites.google.com/site/guangcanliu/}},
(4) Non-negative Matrix Factorization (NMF)\cite{Daniel-NMF-learning-objects-Nature-1999,Wei-NMFclustering-SIGIR-2003}\footnote{\url{http://scikit-learn.org/stable/modules/generated/sklearn.decomposition.NMF}},
(5) Normalized Cut (NCuts)\cite{Jianbo-PAMI-N-cut-2000,Andrew-NIPS-SpectralClustering-2002}\footnote{\url{https://scikit-learn.org/stable/modules/generated/sklearn.cluster.SpectralClustering}},
(6) Structured Doubly Stochastic Matrix (SDS)\cite{Xiaoqian-KDD-2016}, (7) Clustering with Adaptive Neighbors (CAN)\cite{Feiping-KDD-CAN-2014} and
(8) Constrained Laplacian Rank Algorithm for Graph-based Clustering (CLR)\cite{Feiping-AAAI-CLR-2016}\footnote{SDS, CAN and CLR are available at \url{http://www.escience.cn/people/fpnie/index.html}}.

\begin{table}[tb]
	\centering
	\caption{Statistics of datasets.}
	\label{datasets-statiscits}
	{
		\begin{tabular}{l|rrr}
			\toprule
			Datasets     		& \#Samples & \#Features & \#Classes  \\
			\midrule
			diabetes			& 768	& 8	    & 2     \\
			arcene				& 900	& 1000	& 2		\\
			mnist		        & 1000	& 784	& 10	\\				
			alpha\_digit        & 1404	& 320	& 36 	\\
			yeast\_uni          & 1484	& 1470	& 10	\\
			PCMAC				& 1943  & 3289  & 2     \\
			waveform-21	    	& 2746	& 21	& 3		\\
			gisette				& 7000  & 5000  & 2		\\
			\bottomrule
		\end{tabular}
	}
\end{table}
\begin{table*}[htb]
\centering
\caption{Different competitors' clustering performances measured by ACC (\%).}
\label{Clustering-results-ACC}
{
\begin{tabular}{l|rrrrrrrr|r}
\toprule
Dataset		&K-Means	&PCA	&LRR	&NMF	&Ncuts	&SDS	&CAN	&CLR	&RNSE\\
\midrule
diabetes		&$51.6\pm0.0$	&$51.6\pm0.0$	&$52.6\pm0.0$	&$52.0\pm0.1$	&$51.6\pm0.0$	&$51.6\pm0.0$	&$48.7\pm0.0$	&$64.1\pm0.0$	& $\mathbf{64.7\pm0.0}$\\
arcene			&$59.0\pm0.0$	&$59.0\pm0.0$	&$59.0\pm0.0$	&------------	&$59.0\pm0.0$	&$59.0\pm0.0$	&$40.0\pm0.0$	&$50.5\pm0.0$	& $\mathbf{62.5\pm0.0}$\\
mnist			&$51.0\pm0.2$	&$49.3\pm0.2$	&$54.7\pm0.0$	&------------	&$56.1\pm0.7$	&$53.2\pm0.7$	&$55.3\pm0.0$	&$56.6\pm0.0$	& $\mathbf{64.0\pm0.3}$\\
alpha\_digit	&$43.1\pm1.9$	&$45.6\pm1.2$	&$37.8\pm0.8$	&$38.9\pm1.0$	&$39.9\pm1.3$	&$46.4\pm1.0$	&$19.8\pm0.0$	&$28.9\pm0.0$	& $\mathbf{48.1\pm0.8}$\\
yeast\_uni		&$33.7\pm3.9$	&$37.4\pm1.2$	&$17.8\pm0.1$	&------------	&$29.4\pm0.1$	&$35.4\pm0.8$	&$30.8\pm0.1$	&$37.4\pm0.0$	& $\mathbf{41.3\pm0.2}$\\
PCMAC			&$55.4\pm0.0$	&$55.3\pm0.0$	&$55.7\pm0.0$	&$56.8\pm0.0$	&$55.5\pm0.0$	&$51.6\pm0.0$	&$50.6\pm0.0$	&$50.3\pm0.0$	& $\mathbf{59.3\pm0.0}$\\
waveform-21		&$50.7\pm0.1$	&$50.7\pm0.0$	&$46.1\pm0.0$	&------------	&$50.9\pm0.0$	&$51.9\pm0.0$	&$47.5\pm0.0$	&$50.7\pm0.0$	& $\mathbf{52.7\pm0.0}$\\
gisette			&$51.5\pm0.1$	&$49.3\pm0.1$	&$50.6\pm0.1$	&------------	&$92.9\pm0.1$	&$93.3\pm0.0$	&$91.4\pm0.1$	&$93.8\pm0.1$	& $\mathbf{94.9\pm0.1}$\\
\bottomrule
\multicolumn{10}{l}{Note: ``---'' denotes that the mixed signed matrices for those datasets are not suitable for the MU algorithms employed by NMF.}
\end{tabular}}

\end{table*}

\begin{table*}[htb]
\centering

\caption{Different competitors' clustering performances measured by Purity (\%).}
\label{Clustering-results-purity}
{
\begin{tabular}{l|rrrrrrrr|r}
\toprule
Dataset		&K-Means	&PCA	&LRR	&NMF	&Ncuts	&SDS	&CAN	&CLR	&RNSE\\
\midrule
diabetes		&$65.1\pm0.0$	&$65.1\pm0.0$	&$65.1\pm0.0$	&$65.1\pm0.0$	&$65.1\pm0.0$	&$65.1\pm0.0$	&$\mathbf{65.7\pm0.0}$	&$65.3\pm0.0$	& $\mathbf{65.7\pm0.0}$\\
arcene			&$59.0\pm0.0$	&$59.0\pm0.0$	&$59.0\pm0.0$	&------------	&$59.0\pm0.0$	&$59.0\pm0.0$	&$40.0\pm0.0$	&$50.5\pm0.0$	& $\mathbf{62.5\pm0.0}$\\
mnist			&$56.1\pm1.7$	&$55.5\pm1.4$	&$59.9\pm0.0$	&------------	&$60.6\pm0.7$	&$59.0\pm6.8$	&$64.1\pm0.0$	&$57.0\pm0.0$	& $\mathbf{68.1\pm0.3}$\\
alpha\_digit	&$46.5\pm1.6$	&$48.9\pm0.9$	&$41.2\pm0.4$	&$41.9\pm1.0$	&$42.8\pm1.3$	&$49.7\pm0.7$	&$19.8\pm0.0$	&$33.2\pm0.0$	& $\mathbf{50.5\pm0.3}$\\
yeast\_uni		&$47.1\pm1.7$	&$38.8\pm1.2$	&$33.6\pm0.2$	&------------	&$31.3\pm0.0$	&$37.3\pm0.4$	&$46.4\pm0.1$	&$46.4\pm0.0$	& $\mathbf{47.0\pm0.3}$\\
PCMAC			&$55.4\pm0.0$	&$55.4\pm0.0$	&$55.7\pm0.0$	&$56.8\pm0.1$	&$55.5\pm0.0$	&$51.6\pm0.0$	&$50.5\pm0.0$	&$50.5\pm0.0$	& $\mathbf{59.3\pm0.0}$\\
waveform-21		&$53.3\pm0.0$	&$53.5\pm0.0$	&$47.8\pm0.0$	&------------	&$51.7\pm0.0$	&$52.0\pm0.0$	&$\mathbf{52.8\pm0.0}$	&$\mathbf{52.8\pm0.0}$	& $\mathbf{52.8\pm0.0}$\\
gisette			&$69.3\pm0.1$	&$67.8\pm0.1$	&$50.6\pm0.1$	&------------	&$67.9\pm0.1$	&$94.3\pm0.0$	&$93.9\pm0.1$	&$94.0\pm0.1$	& $\mathbf{94.9\pm0.1}$\\
\bottomrule
\end{tabular}}

\end{table*}

Among these algorithms, NCuts, SDS, CAN, CLR and RNSE are approaches that consider the graph-based structures. PCA and LRR seek the low-rank principal components for data compression. NMF is a model with non-negative constraints and thus could learn additive parts-based components. Note that K-Means directly splits the original high-dimensional data points into clusters; while our RNSE method could also cluster the datasets directly owing to the end-to-end single-stage learning for indicator matrix. However, other methods (excluding K-Means, CAN, CLR and RNSE) mainly conduct two-stage learning for data clustering, i.e., low-dimensional embedding and K-Means clustering.

\textbf{Evaluation Metrics.} Accuracy (ACC) and Purity are widely accepted and therefore employed here to assess the clustering performance.
The higher the performance scores, the better the clustering results, based on the ground-truth class labels presented in the datasets.
For more details, please refer to Refs.~\cite{Jin-TKDD-RobustManifoldNMF-2013} and \cite{Feiping-Cybernetics-IndependentClustering-2012}.

\textbf{Experimental Settings.} In light of the experimental settings, all the baseline methods adopt the best parameter configurations as suggested in their corresponding papers. 
We use the widely used self-tune Gaussian method ~\cite{zelnik2005self} to construct the affinity matrix with kernel function $\mathcal{K}(x,y) = \exp \{- {(x - y)^2}/\gamma \}$ (the value of $\gamma$ is self-tuned, for not only RNSE but also SDS, CLR, and CAN as suggested in their corresponding papers).

As to our RNSE approach, we  empirically achieve the competitive $\alpha$ and $\beta$ according to the $6 \times 6$ grid searches with ${10^{[-3:1:2]}}$ and ${10^{[-5:1:0]}}$, respectively. 
Generally speaking, the best clustering performances on different datasets correspond to slightly different parameter settings. But around the parameters $\alpha=1$ and $\beta=1$ , all have achieved competitive experimental results.

In addition, the maximum iterations for Algorithm~\ref{algo:updatingS} and Algorithm~\ref{algo:updatingP} are set to $20$ and $20$ respectively, and the convergence precision is configured as $10^{-9}$ for these two sub-algorithms. Based on such settings, the number of outer cycles for Algorithm~\ref{algo:Framework} is set to $20$ for convergence. 

\textbf{Clustering Results.}
In this part, we collect the average clustering results in terms of ACC and Purity for all the algorithms on these datasets and show them in Table~\ref{Clustering-results-ACC} and Table~\ref{Clustering-results-purity}.
Note that for all datasets, $\alpha$ and $\beta$ in RNSE are both set to $1$, we repeat the experiments for $10$ times and average the metric values as the final results.
Broadly speaking, different clustering approaches perform differently on various datasets. From Table~\ref{Clustering-results-ACC}, we can easily figure out that LRR presents high ACC values on ``PCMAC'' dataset while performs quite poorly on ``yeast\_uni'' and ``alpha\_digit'' datasets. This phenomenon also appears in Table~\ref{Clustering-results-purity} w.r.t. Purity. It's probably due to the complex structures of ``yeast\_uni'' (the cellular localization sites of proteins) and ``alpha\_digit'' (containing both letters and numbers), which are not well match for the low-rank assumptions. In terms of NCuts, a spectral-based method, always performs quite well on various datasets, which is reasonable because spectral-based algorithms could capture the local structures by keeping the neighborhood similarities and therefore preserve the nonlinear manifolds on complex datasets. However, it's usually inferior to the best performers, and this is probably owing to the self-defined similarities and multi-stage learning. In light of other competitors (taking SDS for example), they also display the similar patterns, i.e., yielding high values on some datasets (``yeast\_uni'' or ``waveform-21''), but meanwhile showing poor performance on some other datasets (``alpha\_digit'' or ``PCMAC'').
Nevertheless, as one can see clearly from all the experimental results in Table~\ref{Clustering-results-ACC} and Table~\ref{Clustering-results-purity}, our RNSE method consistently achieves the best or at least comparative performances on all the datasets regarding ACC and Purity.
This confirms that by designing an end-to-end single-stage learning paradigm with structured constraints, RNSE could better capture the hidden complex structures and thus learn a well-performed indicator matrix for clustering.

\textbf{Convergence \& Complexity Analysis.}
The updating rules in Algorithm~\ref{algo:Framework} for minimizing the objective function in the optimization problem (\ref{overall_objective_function}) are essentially iterative. Here, we investigate their convergence and fastness via experiments. Fig.~\ref{convergence-curves} plots the loss curves of our RNSE on all the selected datasets. In each sub-graph, the $y$-axis denotes the normalized objective function value\footnote{Each iteration's value is divided by the first iteration's value.} and $x$-axis is the iteration number. Obviously, we could read that the RNSE algorithm converges quickly, usually within 20 iterations.

By the way, it's also easy to analyze the computational complexities of the two subproblems (Algorithm~\ref{algo:updatingS} and~\ref{algo:updatingP} ) of RNSE, which corresponds to $O(\mathbf{M}\mathbf{N}^{2}+\mathbf{C}\mathbf{N}^{2})$ and $\mathbf{C}\mathbf{N}^{2}$, respectively (usually $\mathbf{M}, \mathbf{C} \ll \mathbf{N} $). 
Therefore, the RNSE's overall complexity is square to the number of samples which is much faster than many existing competitors' complexity (i.e., $O(\mathbf{N}^3)$). However, for a larger dataset, say million-scale samples, RNSE is reasonably to loss efficiency which posits a big challenge for future work.

\section{Conclusion}
To sum up, the main contributions of this paper are twofold:
First, we prove the advantage of performing spectral-style clustering in an end-to-end single-stage fashion where the similarity matrix is not prefixed but learned adaptively from the data.
Second, we show that the difficult optimization problem of our proposed RNSE technique can be decomposed into two subproblems (i.e., metric projection and orthogonal symmetric non-negative matrix factorization) and then solved by successive alternating projection and strategic multiplicative update respectively.

\bibliographystyle{ieeetran}
\bibliography{refs}

\end{document}